\def\eqref#1{equation~\ref{#1}}
\def\1{\bm{1}}
\DeclareMathAlphabet{\mathsfit}{\encodingdefault}{\sfdefault}{m}{sl}
\SetMathAlphabet{\mathsfit}{bold}{\encodingdefault}{\sfdefault}{bx}{n}
\newtheorem{definition}{Definition}
\newtheorem{theorem}{Theorem}
\newtheorem{lemma}{Lemma}
\newtheorem{remark}{Remark}
\newtheorem{corollary}{Corollary}[theorem]
\title{Spherical Motion Dynamics: Learning Dynamics of Neural Network with Normalization, Weight Decay, and SGD}
\author{Ruosi Wan\thanks{Equal Contribution. This work is supported by The National Key Research and Development Program of China (No. 2017YFA0700800) and Beijing Academy of Artificial Intelligence (BAAI). Corresponce to wanruosi@megvii.com.}, Xiangyu Zhang, Jian Sun  \\
Megvii Technology \\
\texttt{\{wanruosi,zhangxiangyu,sunjian\}@megvii.com} \\
\AND
Zhanxing Zhu\footnotemark[1]\\
Peking University \\
\texttt{\{zhanxing.zhu\}@pku.edu.cn}
}
\begin{document}
\maketitle
\begin{abstract}
In this work, we comprehensively reveal the learning dynamics of neural network with normalization, weight decay (WD), and SGD (with momentum), named as Spherical Motion Dynamics (SMD). Most related works study SMD by focusing on ``effective learning rate" in ``equilibrium" condition, i.e. assuming the convergence of weight norm. However, their discussions on why this equilibrium condition can be reached in SMD is either absent or less convincing. Our work investigates SMD by directly exploring the cause of equilibrium condition. Specifically, 1) we introduce the assumptions that can lead to equilibrium condition in SMD, and prove that weight norm can converge at linear rate with given assumptions; 2) we propose ``angular update" as a substitute for effective learning rate to measure the evolving of neural network in SMD, and prove angular update  can also converge to its theoretical value at linear rate; 3) we verify our assumptions and theoretical results on various computer vision tasks including ImageNet and MSCOCO with standard settings. Experiment results show our theoretical findings agree well with empirical observations.
\end{abstract}

\section{Introduction and Background}
Normalization techniques (e.g. Batch Normalization~\citep{ioffe2015batch} or its variants) are one of the most commonly adopted techniques for training deep neural networks (DNN). A typical normalization can be formulated as following: consider a single unit in a neural network, the input is $\bm{X}$, the weight of linear layer is $\bm{w}$ (bias is included in $\bm{w}$), then its output is 
\begin{equation}
    y(\bm{X};\bm{w};\gamma; \beta) = g(\frac{\bm{X}\bm{w}-\mu(\bm{X}\bm{w})}{\sigma(\bm{w}\bm{X})}\gamma +\beta),
\end{equation}
where $g$ is a nonlinear activation function like ReLU or sigmoid, $\mu$, $\sigma$ are mean and standard deviation computed across specific dimension of $\bm{Xw}$ (like Batch Normalization~\citep{ioffe2015batch}, Layer Normalization~\cite{ba2016layer}, Group Normalization~\citep{wu2018group}, etc.). $\beta$, $\gamma$ are learnable parameters to remedy for the limited range of normalized feature map. Aside from normalizing feature map, \citet{salimans2016weight} normalizes weight by $l_2$ norm instead:
\begin{equation}
    y(\bm{X};\bm{w};\gamma; \beta) = g(\bm{X}\frac{\bm{w}}{||\bm{w}||_2}\gamma +\beta),
\end{equation}
where $||\cdot||_2$ denotes $l_2$ norm of a vector.

\paragraph{Characterizing evolving of networks during training.} Though formulated in different manners, all normalization techniques mentioned above share an interesting property: the weight $\bm{w}$ affiliated with a normalized unit is \textit{scale-invariant}: $\forall \alpha \in \mathbb{R}^+$, $y(\bm{X}; \alpha\bm{W}; \gamma, \beta) = y(\bm{X}; \bm{w}; \gamma, \beta)$. Due to the scale-invariant property of weight, the Euclidean distance defined in weight space completely fails to measure the evolving of DNN during learning process. As a result, original definition of learning rate $\eta$ cannot sufficiently represent the update efficiency of normalized DNN. 

To deal with such issue, \citet{laarhoven2017l2,hoffer2018norm,zhang2018three} propose ``effective learning rate" as a substitute for learning rate to measure the update efficiency of normalized neural network with stochastic gradient descent (SGD), defined as
\begin{equation}\label{eq:efr}
    \eta_{eff} = \frac{\eta}{||\bm{w}||_2^2}.
\end{equation}

\paragraph{Joint effects of normalization and weight decay.} \citet{laarhoven2017l2} explores the joint effect of normalization and weight decay (WD), and obtains the magnitudes of weight by assuming the convergence of weight, i.e. if $\bm{w}_t \approx \bm{w}_{t+1}$, the weight norm can be approximated as $||\bm{w}_t||_2 \approx \textit{O}(\sqrt[4]{\eta/\lambda})$, where $\lambda$ is WD coefficient. Combining with Eq.(\ref{eq:efr}),  we have $\eta_{eff}=\sqrt{\eta\lambda}$.  
A more intuitive demonstration about relationship between normalization and weight decay is presented in \citet{chiley2019online} (see Figure \ref{fig:demo}): due to the fact that the gradient of scale invariant weight $\partial \mathcal{L}/\partial \bm{w}$ ($\mathcal{L}$ is the loss function of normalized network without WD part) is always perpendicular to weight $\bm{w}$, one can infer that gradient component $\partial \mathcal{L}/\partial \bm{w}$ always tends to increase the weight norm, while the gradient component provided by WD always tends to reduce weight norm. Thus if weight norm remains unchanged, or ``equilibrium has been reached"\footnote{"Weight norm remains unchanged" means $||\bm{w}_t||_2 \approx ||\bm{w}_{t+1}||_2$, \citet{chiley2019online} calls this condition as ``equilibrium", which will also be used in the following context of this paper. Note equilibrium condition is not mathematically rigorous, we only use it for intuitive analysis.}, one can obtain
\begin{equation}\label{eq:chaliy}
\frac{\bm{w}_t-\bm{w}_{t+1}}{||\bm{w}_t||_2} =\sqrt{2\eta\lambda}\frac{\partial \mathcal{L}/\partial \bm{w}}{\mathbb{E}||\partial \mathcal{L}/\partial \bm{w}||_2}.
\end{equation}
Eq.(\ref{eq:chaliy}) implies the magnitude of update is scale-invariant of gradients, and effective learning rate should be $\sqrt{2\eta\lambda}$. \citet{li2019an} manages to estimate the magnitude of update in SGDM, their result is presented in limit and accumulation manner: if both  $\lim_{T\longrightarrow \infty} R_T=1/T\sum_{t=0}^{T}||\bm{w}_t||$ and $\lim_{T\longrightarrow \infty} D_T=1/T\sum_{t=0}^{T}||\bm{w}_t-\bm{w}_{t+1}||$  exist, then we have
\begin{equation}\label{eq:li}
    \lim_{T\longrightarrow \infty}\frac{D_T}{R_T} = \frac{2\eta\lambda}{1+\alpha}.
\end{equation}
Though not rigorously, one can easily speculate from Eq.(\ref{eq:li}) the magnitude of update in SGDM cases should be $\sqrt{2\eta\lambda/(1+\alpha)}$ in equilibrium condition. But proof of Eq.(\ref{eq:li}) requires more strong assumptions: not only convergence of weight norm, but also convergence of update norm  $\| \bm{w}_{t+1} - \bm{w}_t\|_2$ (both in accumulation manner).

\begin{wrapfigure}{rt}{0.5\textwidth}
\centering
\includegraphics[width=0.27\textwidth]{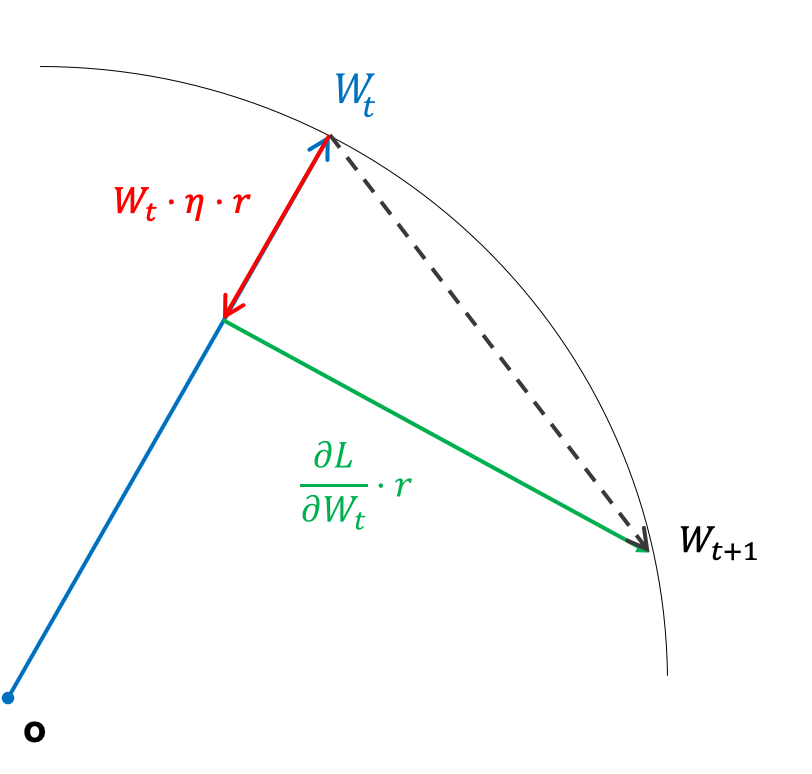}
\caption{Illustration of optimization behavior with BN and WD. Angular update $\Delta_t$ represents the angle between the updated weight $\bm{W}_t$ and its former value $\bm{W}_{t+1}$.}
\label{fig:demo}
\end{wrapfigure}

As discussed above, all previous qualitative results about ``effective learning rate"~\citep{laarhoven2017l2,chiley2019online,li2019an} highly rely on equilibrium condition, but none of them explores why such equilibrium condition can be achieved. Only \citet{laarhoven2017l2} simply interprets the occurrence of equilibrium as a natural result of convergence of optimization, i.e. when optimization is close to be finished, $\bm{w}_t \approx \bm{w}_{t+1}$, resulting in equilibrium condition. However, this interpretation has an apparent contradiction: according to Eq.(\ref{eq:chaliy}) and (\ref{eq:li}), when equilibrium condition is reached, the magnitude of update is constant, only determined by hyper-parameters, which means optimization process has not converged yet. \cite{li2019an} also notices the non-convergence of SGD with BN and WD, so they do not discuss reasonableness of assumptions adopted by Eq.(\ref{eq:li}). In a word, previous results about ``effective learning rate" in equilibrium condition can only provide vague insights, they are difficult to be connected with empirical observation.

In this work, we comprehensively reveal the learning dynamics of normalized neural network using stochastic gradient descent without/with momentum (SGD/SGDM) and weight decay, named as \textbf{Spherical Motion Dynamics (SMD}). Our investigation aims to answer the following question, 
\begin{center}
\textit{Why and how equilibrium condition can be reached in Spherical Motion Dynamics?}
\end{center}
Specifically, our contributions are
\begin{itemize}
    \item We introduce the assumptions that can lead to equilibrium condition in SMD, and justify their reasonableness by sufficient experiments. We also prove under given assumptions, equilibrium condition can be reached as weight norm converges to its theoretical value at linear rate in SMD. Our assumptions show the equilibrium condition can occur long before the convergence of the whole optimization;
    \item We define a novel index, \emph{angular update}, to measure the change of normalized neural network in a single iteration, and derive its theoretical value under equilibrium condition in SMD. We also prove angular update can converge to its theoretical value at linear rate along with convergence of weight norm. Our results imply that the update efficiency of SGD/SGDM on normalized neural network only depends on pre-defined hyper-parameters in both SGD and SGDM case;
    \item We verify our theorems on different computer vision tasks (including one of most challenging datasets ImageNet~\citep{russakovsky2015imagenet} and MSCOCO~\citep{lin2014microsoft}) with various networks structures and normalization techniques. Experiments show the theoretical value of angular update and weight norm agree well with empirical observation.
\end{itemize}

\section{Related Work}
\paragraph{Normalization techniques}Batch normalization(BN~\citep{ioffe2015batch}) is proposed to deal with gradient vanishing/explosion, and accelerate the training of DNN. Rapidly, BN has been widely used in almost all kinds of deep learning tasks. Aside from BN, more types of normalization techniques have been proposed to remedy the defects of BN~\citep{ioffe2017batch,wu2018group,chiley2019online,Yan2020Towards} or to achieve better performance~\citep{ba2016layer,ulyanov2016instance,salimans2016weight,shao2019ssn,singh2020filter}. Though extremely effective, the mechanism of BN still remains as a mystery. Existing works attempt to analyze the function of BN: \citet{ioffe2015batch} claims BN can reduce the Internal Covariance Shift (ICS) of DNN; \citet{santurkar2018how} argue that the effectiveness of BN is not related to ICS, but the smoothness of normalized network; \citet{luo2018towards} shows BN can be viewed as an implicit regularization technique; \citet{cai2019aquantitative} proves that with BN orthogonal least square problem can converge at linear rate; \citet{dulker2020optimization} proves weight normalization can speed up training in a two-layer ReLU network.

\paragraph{Weight decay}Weight decay (WD) is well-known as $l_2$ regularization, or ridge regression, in statistics. WD is also found to be extreme effective when applied in deep learning tasks. \citet{krizhevsky2009learning} shows WD sometimes can even improve training accuracy not just generalization performance; \citet{zhang2018three} show WD can regularize the input-output Jacobian norm and reduce the effective damping coefficient; \cite{li2020understanding} discusses the disharmony between WD and weight normalizaiton.

\paragraph{Effective learning rate}Due to the scale invariant property caused by normalization, researchers start to study the behavior of effective learning rate. \cite{laarhoven2017l2,chiley2019online} estimate the magnitude of effective learning rate under equilibrium assumptions in SGD case; \cite{hoffer2018norm} quantify effective learning rate without equilibrium assumptions, so their results are much weaker; \cite{arora2018theoretical} proves that without WD, normalized DNN can still converge with fixed/decaying learning rate in GD/SGD cases respectively; \cite{zhang2018three} shows WD can increase effective learning rate; \citet{li2019an} proves standard multi-stage learning rate schedule with BN and WD is equivalent to an exponential increasing learning rate schedule without WD. As a proposition, \citet{li2019an} quantifies the magnitude of effective learning rate in SGDM case. But none of them have ever discussed why equilibrium condition can be reached.

\section{Preliminary on Spherical Motion Dynamics}
First of all, we review the property of scale invariant weight, and depict Spherical Motion Dynamics  (SMD) in SGD case. 
\begin{lemma}\label{lemma:1}
	If $\bm{w}$ is scale-invariant with respect to $\mathcal{L}(\bm{w})$ , then for all $k >0$, we have:
	\begin{eqnarray}
		\langle \bm{w}_t, \frac{\partial \mathcal{L}}{\partial \bm{w}}\Big|_{\bm{w}=\bm{w}_t} \rangle &=& 0  \label{eq:veritcal}\\
		\frac{\partial \mathcal{L}}{\partial \bm{w}}\Big|_{\bm{w}=k\bm{w}_t} &=& \frac{1}{k} \cdot \frac{\partial \mathcal{L}}{\partial \bm{w}}\Big|_{\bm{w}=\bm{w}_t}. \label{eq:inv_scale}
	\end{eqnarray}
\end{lemma}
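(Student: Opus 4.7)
The plan is to exploit the defining identity of scale invariance, $\mathcal{L}(k\bm{w}) = \mathcal{L}(\bm{w})$ for every $k>0$, and differentiate it in two different ways---once with respect to the vector $\bm{w}$ and once with respect to the scalar $k$. Each direction of differentiation will produce exactly one of the two claimed equalities, so the argument is really two short chain-rule computations glued together.

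First I would establish equation (\ref{eq:inv_scale}). Differentiating $\mathcal{L}(k\bm{w})=\mathcal{L}(\bm{w})$ with respect to $\bm{w}$ and applying the chain rule to the left-hand side gives $k\cdot \partial\mathcal{L}/\partial\bm{w}\big|_{k\bm{w}} = \partial\mathcal{L}/\partial\bm{w}\big|_{\bm{w}}$, and then dividing by $k$ (which is positive, hence nonzero) yields the stated inverse-scaling of the gradient. Specializing to $\bm{w}=\bm{w}_t$ gives exactly (\ref{eq:inv_scale}).

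Next I would establish equation (\ref{eq:veritcal}) by differentiating the same identity with respect to the scalar $k$ instead. The left side produces $\langle \bm{w}, \partial\mathcal{L}/\partial\bm{w}\big|_{k\bm{w}}\rangle$ via the chain rule, while the right side is independent of $k$ and therefore has derivative zero. This yields $\langle \bm{w}, \partial\mathcal{L}/\partial\bm{w}\big|_{k\bm{w}}\rangle = 0$ for all $k>0$. Evaluating at $k=1$ and $\bm{w}=\bm{w}_t$ gives (\ref{eq:veritcal}). Alternatively, one can obtain (\ref{eq:veritcal}) as a direct corollary of (\ref{eq:inv_scale}) by observing that, from $\partial\mathcal{L}/\partial\bm{w}\big|_{k\bm{w}_t} = (1/k)\,\partial\mathcal{L}/\partial\bm{w}\big|_{\bm{w}_t}$, differentiating in $k$ at $k=1$ along the ray $k\bm{w}_t$ yields the directional derivative $\langle \bm{w}_t,\partial\mathcal{L}/\partial\bm{w}\big|_{\bm{w}_t}\rangle$ on one side and $-\partial\mathcal{L}/\partial\bm{w}\big|_{\bm{w}_t}$ on the other, and taking inner product with $\bm{w}_t$ forces orthogonality.

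There is no real obstacle here: the only implicit hypothesis I need is that $\mathcal{L}$ is differentiable on an open cone containing the ray $\{k\bm{w}_t:k>0\}$, which is guaranteed by the smoothness of the normalized layer away from $\bm{w}=\bm{0}$ (and $\bm{w}_t\neq\bm{0}$ whenever the network is well defined). The only care I would take is to be explicit about what ``scale-invariant with respect to $\mathcal{L}$'' means as a premise, since the entire proof is simply the infinitesimal form of that premise applied in two directions.
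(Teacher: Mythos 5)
Your main argument is correct and is essentially identical to the paper's proof: differentiate the identity $\mathcal{L}(k\bm{w})=\mathcal{L}(\bm{w})$ once in $\bm{w}$ to get the $1/k$ gradient scaling, and once in $k$ (where the right-hand side is constant) to get orthogonality of $\bm{w}_t$ and the gradient. The only caveat is your ``alternatively'' aside for deriving \eqref{eq:veritcal} from \eqref{eq:inv_scale}: differentiating the gradient identity in $k$ produces a Hessian--vector product, not the scalar $\langle \bm{w}_t,\partial\mathcal{L}/\partial\bm{w}\rangle$, so that sketch does not go through as written --- but it is unnecessary, since your primary derivation already matches the paper's.
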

Proof can be seen in Appendix A.1. Lemma \ref{lemma:1} is also discussed in \cite{hoffer2018norm, laarhoven2017l2, li2019an}. Eq.(\ref{eq:inv_scale})
implies gradient norm is influenced by weight norm, but weight norm does not affect the output of DNN, thus we define \textbf{unit gradient} to eliminate the effect of weight norm. 
\begin{definition}[\textbf{Unit Gradient}]
If $\bm{w}_t\neq \bm{0}$,  $\tilde{\bm{w}} =\bm{w}/||\bm{w}||_2$, the unit gradient of $\partial \mathcal{L}/ \partial \bm{w}|_{\bm{w}=\bm{w}_t}$ is $\partial \mathcal{L}/ \partial \bm{w}|_{\bm{w}=\tilde{\bm{w}_t}}$.
\end{definition}
According to Eq.(\ref{eq:inv_scale}), we have 
\begin{equation}\label{eq:unit_grad_norm}
	 \frac{\partial \mathcal{L}}{\partial \bm{w}}\Big|_{\bm{w}=\bm{w}_t} =  \frac{1}{||\bm{w}_t||} \cdot \frac{\partial \mathcal{L}}{\partial \bm{w}}\Big|_{\bm{w}=\Tilde{\bm{w}}_t}.
\end{equation}
A typical SGD update rule without WD is 
\begin{equation}\label{eq:sgd_wd}
    \bm{w}_{t+1} = \bm{w}_t - \eta \frac{\partial \mathcal{L}}{\partial \bm{w}}\Big|_{\bm{w}=\bm{w}_t},
\end{equation}
if $||\bm{w}_t||_2 \approx ||\bm{w}_{t+1}||_2$, one can obtain
\begin{equation}\label{eq:effective_update}
    \tilde{\bm{w}}_{t+1} = \tilde{\bm{w}}_t - \frac{\eta}{||\bm{w}_t||_2^2} \frac{\partial \mathcal{L}}{\partial \bm{w}}\Big|_{\bm{w}=\tilde{\bm{w}}_t} = \tilde{\bm{w}}_t - \eta_{eff} \cdot \frac{\partial \mathcal{L}}{\partial \bm{w}}\Big|_{\bm{w}=\tilde{\bm{w}}_t}.
\end{equation}
Eq.(\ref{eq:effective_update}) shows effective learning rate can be viewed as learning rate of SGD on unit sphere $\mathcal{S}^{p-1}$~\citep{hoffer2018norm}. But effective learning rate still cannot properly represent the magnitude of update, since unit gradient norm is unknown. Therefore we propose  the \textbf{angular update} defined below. 
\begin{definition}[\textbf{Angular Update}]
Assuming $\bm{w}_t$ is a scale-invariant weight from a neural network at iteration $t$, then the \textbf{angular update} $\Delta_t$ is defined as
\begin{equation}
    \Delta_t =\measuredangle(\bm{w}_t, \bm{w}_{t+1}) =  arccos\left(\frac{\langle\bm{w}_t, \bm{w}_{t+1}\rangle}{||\bm{w}_t||\cdot||\bm{w}_{t+1}||}\right),
\end{equation}
where $\measuredangle(\cdot,\cdot)$ denotes the angle between two vectors, $\langle \cdot ,\cdot \rangle$ denotes the inner product.
\end{definition}
According to Eq.(\ref{eq:veritcal}), $\partial \mathcal{L} / \partial \bm{w}$ is perpendicular to weight $\bm{w}$. Therefore, if learning rate $\eta$ is small enough, one can derive
\begin{equation}
    \Delta_t \approx \tan(\Delta_t) = \frac{\eta}{||\bm{w}_t||} \cdot ||\frac{\partial \mathcal{L}}{\partial \bm{w}}\Big|_{\bm{w}=\bm{w}_t}||_2.
\end{equation}
Another deduction from Eq.(\ref{eq:veritcal}) is that weight norm always increases because
\begin{equation}\label{eq:increasing_norm}
    ||\bm{w}_{t+1}||_2^2 = ||\bm{w}_{t}||_2^2 + (\eta ||\frac{\partial \mathcal{L}}{\partial \bm{w}}\Big|_{\bm{w}=\bm{w}_t}||_2)^2 > ||\bm{w}_{t}||_2^2.
\end{equation}
From Eq.(\ref{eq:inv_scale}) we can infer that increasing weight norm can lead to smaller gradient norm if unit gradient norm is unchanged. \citet{zhang2018three} states the potential risk that GD/SGD without WD but BN will converge to a stationary point not by reducing loss but by reducing effective learning rate due to increasing weight norm. \citet{arora2018theoretical} proves that full gradient descent can avoid such problem and converge to a stationary point defined in $\mathcal{S}^{p-1}$, but their results still require sophisticated learning rate decay schedule in SGD case. Besides, practical implementation suggests training DNN without WD always suffers from poor generalization.

Now considering the update rule of SGD with WD:
\begin{equation}\label{eq:sgd_no_wd}
    \bm{w}_{t+1} = \bm{w}_t - \eta (\frac{\partial \mathcal{L}}{\partial \bm{w}}\Big|_{\bm{w}=\bm{w}_t}+\lambda \bm{w}_t).
\end{equation}
We can easily derive the update  of weight norm by
\begin{equation}\label{eq:sgdm_wd_norm}
    ||\bm{w}_{t+1}||_2 \approx ||\bm{w}_t||_2 - \lambda\eta ||\bm{w}_t||_2 + \frac{\eta^2}{2||\bm{w}_t||^3_2}\cdot ||\frac{\partial \mathcal{L}}{\partial \bm{w}}\Big|_{\bm{w}=\tilde{\bm{w}}_t}||_2^2.
\end{equation}
Eq.(\ref{eq:sgdm_wd_norm}) implies WD can provide direction to reduce weight norm, hence \citet{chiley2019online,zhang2018three} point out the possibility that weight norm can be steady, but  do not explain this clearly. Here we demonstrate the mechanism deeper~(see Figure \ref{fig:demo}): if unit gradient norm remains unchanged, note ``centripetal force"~($- \lambda\eta ||\bm{w}_t||_2$) is proportional to weight norm, while ``centrifugal force”~($\frac{\eta^2}{2||\bm{w}_t||^3_2}\cdot ||\frac{\partial \mathcal{L}}{\partial \bm{w}}\Big|_{\bm{w}=\tilde{\bm{w}}_t}||_2^2$) is inversely proportional cubic weight norm. As a result, the dynamics of weight norm is like a spherical motion in physics: overly large weight norm makes centripetal force larger than centrifugal force, leading to decreasing weight norm; while too small weight norm makes centripetal force smaller than centrifugal force, resulting in the increase of  weight norm. Intuitively, equilibrium condition tends to be reached if the number of iterations is sufficiently large. 

Notice that the core assumption mentioned above is ``unit gradient norm is unchanged". In fact this assumption can solve the contradiction we present in Section 1: the convergence of weight norm is not equivalent to convergence of weight, steady unit gradient norm can also make weight norm converge, steady unit gradient norm does not rely on the fact that optimization has reached an optimum solution. In next section, We formulate this assumption in a reasonable manner, and rigorously prove the existence of equilibrium condition in SGD case. Discussion about SGD cannot be easily extended to SGDM case, because momentum is not always perpendicular to weight as unit gradient. But we can still prove the convergence of weight norm in SGDM case under modified assumptions.

\section{Main Results}
First of all, we prove the existence of equilibrium condition in SGD case and provide the convergence rate of weight norm. 
\begin{theorem}{\textbf{(Equilibrium condition in SGD)}}\label{theom:sgd}
Assume the loss function is $\mathcal{L}(\bm{X};\bm{w})$ with scale-invariant weight $\bm{w}$, denote $\bm{g}_t = \frac{\partial \mathcal{L}}{\partial \bm{w}} \big|_{\bm{X}_t, \bm{w}_t}$, $\tilde{\bm{g}}_t = \bm{g}_t \cdot ||\bm{w}_t||_2$. Considering the update rule of SGD with weight decay,
\begin{equation}
	\bm{w}_{t+1} = \bm{w}_{t} - \eta \cdot (\bm{g}_t + \lambda \bm{w}_t)
\end{equation}
where $\lambda, \eta \in (0, 1)$. If the following assumptions hold: 1) $\lambda\eta << 1$; 2) $\exists L, V, \in \mathbb{R}^+$, $\mathbb{E}[||\tilde{\bm{g}}_t||_2^2|\bm{w}_t] = L$, $\mathbb{E}[(||\tilde{\bm{g}}_t||_2^2-L)^2|\bm{w}_t] \leq V$; 3) $\exists l \in \mathbb{R}^+$, $||\tilde{\bm{g}}_t||_2^2 > l$, $l > 2[\frac{2\lambda \eta }{1-2\lambda \eta}]^2L$. Then $\exists B > 0$, $w^* = \sqrt[4]{L\eta/(2\lambda)}$, we have
\begin{equation}\label{eq:main_sgd}
\mathbb{E}[||\bm{w}_T||_2^2 - (w^*)^2]^2 \leq (1-4\lambda\eta)^{T} B + \frac{V\eta^2}{l}.
\end{equation}
\end{theorem}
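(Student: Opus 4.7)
The plan is to analyze the one-dimensional Markov chain $z_t := \|\bm{w}_t\|_2^2$ and show mean-square contraction toward $(w^*)^2$. First, rewriting the SGD update as $\bm{w}_{t+1} = (1-\eta\lambda)\bm{w}_t - \eta\bm{g}_t$ and taking squared norms, the scale-invariance identity $\langle \bm{w}_t, \bm{g}_t\rangle = 0$ from Lemma~\ref{lemma:1} annihilates the cross-term and leaves
\begin{equation}
z_{t+1} \;=\; (1-\eta\lambda)^2 z_t + \eta^2 \|\bm{g}_t\|_2^2 \;=\; (1-\eta\lambda)^2 z_t + \eta^2 \|\tilde{\bm{g}}_t\|_2^2 / z_t,
\end{equation}
where the second equality substitutes $\|\bm{g}_t\|_2 = \|\tilde{\bm{g}}_t\|_2/\|\bm{w}_t\|_2$ from Eq.~(\ref{eq:unit_grad_norm}). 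All dependence on the direction of $\bm{w}_t$ disappears, so the problem reduces to a scalar stochastic recursion.

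Second, split the recursion into a deterministic drift plus martingale noise, $z_{t+1} = f(z_t) + \xi_t$, with $f(z) = (1-\eta\lambda)^2 z + \eta^2 L/z$ and $\xi_t = \eta^2(\|\tilde{\bm{g}}_t\|_2^2 - L)/z_t$. Assumption 2 gives $\mathbb{E}[\xi_t|z_t]=0$ and $\mathbb{E}[\xi_t^2|z_t]\leq \eta^4 V/z_t^2$. The exact fixed point of $f$ solves $(2\eta\lambda - \eta^2\lambda^2)(z^*)^2 = \eta^2 L$, which under Assumption 1 agrees with $(w^*)^2 = \sqrt{\eta L/(2\lambda)}$ up to an $O(\eta\lambda)$ correction that the constant $B$ will absorb.

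Third, I establish a uniform lower bound on $z_t$: applying AM-GM to the recursion together with $\|\tilde{\bm{g}}_t\|_2^2 > l$ from Assumption 3 gives $z_t \geq 2\eta(1-\eta\lambda)\sqrt{l}$ for every $t\geq 1$. This lower bound plays two roles simultaneously: (i) it bounds the noise by $\eta^4 V/z_t^2 \leq V\eta^2/l$ up to constants; and (ii) it confines $z_t$ to a region on which the drift $f$ is contractive. A direct factorization yields
\begin{equation}
f(z) - (w^*)^2 \;=\; \rho(z)\,(z - (w^*)^2) + \epsilon_\eta, \qquad \rho(z) = 1 - (2\eta\lambda - \eta^2\lambda^2)\bigl(1 + (w^*)^2/z\bigr),
\end{equation}
with $\epsilon_\eta$ the $O(\eta\lambda)$ constant coming from replacing the exact fixed point with $(w^*)^2$. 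The peculiar numerical form of Assumption 3, $l > 2[2\lambda\eta/(1-2\lambda\eta)]^2 L$, is exactly what is needed to ensure $\rho(z)^2 \leq 1 - 4\eta\lambda$ uniformly on $z \geq z_{\min}$.

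Assembling the three ingredients produces $\mathbb{E}[(z_{t+1} - (w^*)^2)^2 \,|\, z_t] \leq (1-4\eta\lambda)(z_t - (w^*)^2)^2 + V\eta^2/l$; taking total expectation and iterating $T$ times delivers the stated bound, with $B$ absorbing $\mathbb{E}[(z_0 - (w^*)^2)^2]$ together with the $O(\eta\lambda)$ lower-order terms. I expect the main obstacle to be the contraction step: the naive bound $\rho(z)^2 \leq (1-2\eta\lambda)^2 = 1 - 4\eta\lambda + 4\eta^2\lambda^2$ is too loose, and pinning the rate down to exactly $1-4\eta\lambda$ requires exploiting the strictly positive contribution from $(w^*)^2/z$ in $\rho(z)$, controlled via the explicit Assumption~3 lower bound on $z_t$. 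The quadratic form of Assumption 3 is precisely calibrated so that this cancellation goes through.
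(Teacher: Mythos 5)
Your overall architecture is exactly the paper's: reduce to the scalar recursion $z_{t+1}=(1-\eta\lambda)^2z_t+\eta^2\|\tilde{\bm{g}}_t\|_2^2/z_t$, split it into a deterministic drift plus conditionally centered noise, factor $f(z)-z^*=\rho(z)(z-z^*)$, and use a lower bound on $z_t$ together with Assumption~3 to get one-step mean-square contraction. The genuine gap is in the lower bound. Your one-step AM--GM gives only $z_t\geq 2\eta(1-\eta\lambda)\sqrt{l}$, i.e.\ $z_t^2\gtrsim \eta^2 l$, whereas both uses of the lower bound require $z_t^2\gtrsim l\eta/\lambda$ --- larger by the factor $1/(\eta\lambda)\gg 1$. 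Concretely: (i) to end up with the additive term $V\eta^2/l$ \emph{after} summing the geometric series $\sum_k(1-4\lambda\eta)^k$, the per-step noise must be at most about $4\lambda\eta\cdot V\eta^2/l=4V\lambda\eta^3/l$, which forces $z_t^2\geq l\eta/(4\lambda)$; with $z_t^2\gtrsim\eta^2l$ the per-step noise is only $O(V\eta^2/l)$, and your one-step inequality with additive constant $V\eta^2/l$ iterates to $O(V\eta/(\lambda l))$, not $V\eta^2/l$ --- off by $1/(\lambda\eta)$. (ii) On the region $z\gtrsim\eta\sqrt{l}$ the drift is not contractive: there $z^*/z$ can be as large as $O\bigl((\lambda\eta)^{-3/2}\sqrt{L/l}\bigr)$ under Assumption~3, so $\rho(z)=1-2\eta\lambda(1+z^*/z)$ can be a large negative number with $\rho(z)^2\gg1$. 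Assumption~3 only delivers $0<\rho(z)<1-2\lambda\eta$ once $z_t^2\geq l\eta/(4\lambda)$, since then $z^*/z\leq\sqrt{2L/l}<(1-2\lambda\eta)/(2\lambda\eta)$. The paper closes exactly this gap with Lemma~\ref{lemma:2}: the minorizing deterministic recursion $x_{t+1}\geq(1-2\lambda\eta)x_t+l\eta^2/x_t$ converges geometrically \emph{upward} to $\sqrt{l\eta/(2\lambda)}$, so $x_t>\sqrt{l\eta/(4\lambda)}$ after a finite transient depending only on $x_0,l,\eta,\lambda$, and the transient is absorbed into $B$. You need this multi-step argument (or an equivalent); the one-step AM--GM cannot deliver it.

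Two smaller points. A persistent $O(\eta\lambda)$ offset between the exact fixed point of $f$ and $(w^*)^2$ cannot be ``absorbed by $B$'', since $B$ multiplies a factor that decays to zero; it would contribute a standing bias of order $(\eta\lambda)^2(w^*)^4$ to the left-hand side. The paper sidesteps this by discarding $O((\eta\lambda)^2)$ at the level of the recursion itself, so that $(w^*)^2$ is the exact fixed point of the approximated map $x\mapsto(1-2\lambda\eta)x+L\eta^2/x$. Likewise, pinning the rate to exactly $1-4\eta\lambda$ rather than $(1-2\eta\lambda)^2$ via the $z^*/z$ term would additionally require an upper bound on $z_t$ (for $z\gg z^*$ the correction vanishes and $\rho(z)^2\to(1-2\eta\lambda)^2$); the paper simply approximates $(1-2\lambda\eta)^2\approx1-4\lambda\eta$. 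These are cosmetic at the paper's level of rigor; the missing lower-bound lemma is the substantive defect.
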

\begin{remark}The theoretical value of weight norm $w^*$ in Theorem \ref{theom:sgd} is consistent with the derivation of weight norm in equilibrium in \citet{laarhoven2017l2}, though \citet{laarhoven2017l2} assumes the equilibrium condition has been reached in advance, hence it cannot provide the convergence rate and magnitude of variance.
\end{remark}
Proof of Theorem \ref{theom:sgd} can be seen in Appendix \ref{proof:them1}. It shows the square of weight norm  can linearly converge to its theoretical value in equilibrium, and its variance is bounded by the variance of the square of unit gradient norm  multiplying the square of learning rate. Now we discuss the reasonableness of assumptions in Theorem \ref{theom:sgd}. Assumption 1 is consistent with commonly used settings; assumptions 2 and 3 imply the expectation of the square of unit gradient norm remains unchanged  within several iterations, and its lower bound cannot be far from its expectation. We need to clarify assumption 2 further: we do not require the expectation of the  unit gradient norm must be constant across the whole training process, but only remains unchanged locally. On one hand, small learning rate can guarantee the expectation of unit gradient norm square changes slowly; on the other hand, when its expectation changes, it means that the weight norm square will change to its new theoretical value as Theorem \ref{theom:sgd} describes. Experiment result in Figure \ref{fig:fixed_learning_rate} strongly justifies our analysis. We also conduct extensive experiments to verify our claim further, please refer Appendix \ref{app:sythetic}.

Now we extend Theorem \ref{theom:sgd} to SGDM case. SGDM is more complex than SGD  since momentum is not always perpendicular to the weight, therefore we need to modify assumptions.
\begin{theorem}{\textbf{(Equilibrium condition in SGDM)}}\label{theom:sgdm}
Considering the update rule of SGDM (heavy ball method~\citep{polyak1964some}):
\begin{eqnarray}
	\bm{v}_{t} &=& \alpha \bm{v}_{t-1} + \bm{g}_t + \lambda \bm{w}_t \\
	\bm{w}_{t+1} &=& \bm{w}_{t} - \eta \bm{v}_{t}
\end{eqnarray}
where $\lambda, \eta, \alpha \in (0, 1)$. If following assumptions hold: 4) $\lambda\eta << 1$, $\lambda\eta < (1-\sqrt{\alpha})^2$; 5) Define $h_t = ||\bm{g}_t||_2^2 + 2\alpha \langle \bm{v}_{t-1}, \bm{g}_t \rangle$, $\tilde{h}_t = h_t \cdot ||\bm{w}_t||_2^2$. $\exists L, V, \in \mathbb{R}^+$, $\mathbb{E}[\tilde{h}_t|\bm{w}_t] = L$, $\mathbb{E}[(\tilde{h}_t-L)^2|\bm{w}_t] \leq V$; 6)$\exists l \in \mathbb{R}^+$, $h_t > l$, $l > 2[\frac{6\lambda \eta }{(1-\alpha)^3(1+\alpha)-8\lambda\eta(1-\alpha)}]^2L$.
then $\exists B>0$, $w^* = \sqrt[4]{L\eta/(\lambda(1-\alpha)(2-\lambda\eta/(1+\alpha)))}$, we have
\begin{equation}\label{eq:main_sgdm}
\mathbb{E}[||\bm{w}_T||_2^2 - (w^*)^2]^2 \leq 3B\cdot (1-\frac{4\lambda\eta}{1-\alpha})^{T} + \frac{3V\eta^2(1+4\alpha^2+\alpha^4)}{l(1-\alpha)^4},
\end{equation}
\end{theorem}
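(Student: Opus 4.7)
The plan is to follow the same contraction strategy as for Theorem~1 while handling the key new complication that the momentum $\bm{v}_t$ is no longer perpendicular to $\bm{w}_t$, so $\|\bm{w}_t\|_2^2$ alone does not close a recursion. First I would rewrite SGDM as the one-step linear system $\bm{w}_{t+1} = \bm{w}_t - \eta\bm{v}_t$ and $\bm{v}_{t+1} = \lambda\bm{w}_t + (\alpha - \lambda\eta)\bm{v}_t + \bm{g}_{t+1}$ (obtained by substituting $\bm{w}_{t+1} = \bm{w}_t - \eta\bm{v}_t$ into the definition of $\bm{v}_{t+1}$), and then track the three scalar observables $a_t = \|\bm{w}_t\|_2^2$, $b_t = \langle \bm{w}_t, \bm{v}_t\rangle$, $c_t = \|\bm{v}_t\|_2^2$. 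Expanding $a_{t+1}, b_{t+1}, c_{t+1}$ and repeatedly applying Lemma~1 in the form $\langle \bm{w}_{t+1}, \bm{g}_{t+1}\rangle = 0$ (equivalently, $\langle \bm{w}_t, \bm{g}_{t+1}\rangle = \eta\langle \bm{v}_t, \bm{g}_{t+1}\rangle$) removes $\bm{g}_{t+1}$ entirely from the updates of $a_t$ and $b_t$ and collapses its contribution to $c_{t+1}$ into exactly $h_{t+1} = \|\bm{g}_{t+1}\|_2^2 + 2\alpha\langle \bm{v}_t, \bm{g}_{t+1}\rangle$ --- precisely the quantity controlled by assumption~5.

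The resulting recursion is affine, $\xi_{t+1} = M\xi_t + h_{t+1}e_3$ for $\xi_t = (a_t, b_t, c_t)^\top$, where the $3\times 3$ matrix $M$ --- being the induced action on the quadratic observables of the $2\times 2$ linear system governing $(\bm{w}_t, \bm{v}_t)$ with matrix $A = \bigl(\begin{smallmatrix}1 & -\eta\\ \lambda & \alpha - \lambda\eta\end{smallmatrix}\bigr)$ --- has eigenvalues $\{\mu_1^2, \mu_1\mu_2, \mu_2^2\}$, where $\mu_1, \mu_2$ are the roots of $\mu^2 - (1+\alpha-\lambda\eta)\mu + \alpha = 0$. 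Assumption~4, $\lambda\eta < (1-\sqrt{\alpha})^2$, is precisely the condition that makes the discriminant nonnegative, forcing both roots to be real and to lie in $(0,1)$; a short Taylor expansion gives $\mu_2 \leq 1 - \lambda\eta/(1-\alpha) + O((\lambda\eta)^2)$, which after squaring for the $L^2$ bound yields the contraction rate $1 - 4\lambda\eta/(1-\alpha)$ appearing in (\ref{eq:main_sgdm}).

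Next I would locate the fixed point by solving $\xi^* = M\xi^* + (L/a^*)e_3$, using $\mathbb{E}[h_t \mid \bm{w}_t] = L/a_t$ (from assumption~5, since $\tilde{h}_t = h_t \cdot a_t$): the first two components give $b^* = -\lambda a^*/(1+\alpha)$ and $c^* = 2\lambda a^*/[\eta(1+\alpha)]$, and substituting into the third reduces it to $\lambda(1-\alpha)(a^*)^2[2 - \lambda\eta/(1+\alpha)] = L\eta$, recovering exactly the stated $w^* = (a^*)^{1/2}$. Subtracting the fixed point gives $\delta_{t+1} = M\delta_t + (h_{t+1} - L/a_t)e_3 + L(1/a_t - 1/a^*)e_3$ for $\delta_t = \xi_t - \xi^*$; the first noise term has zero conditional mean and variance at most $V/a_t^2$ (assumption~5), while the last term is a nonlinear residual whose Lipschitz constant near $a^*$ must stay below the contraction gap. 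Assumption~6 is calibrated precisely for this: the lower bound $h_t > l$ forces $a_t$ to remain bounded below, and the specific coefficient $6\lambda\eta/[(1-\alpha)^3(1+\alpha) - 8\lambda\eta(1-\alpha)]$ is exactly what is required for the resulting perturbation of $M$ to keep its effective spectral radius below $1 - 2\lambda\eta/(1-\alpha)$.

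The main obstacle I anticipate is the non-normality of $M$: the eigenvalue bound does not immediately give an $L^2$ contraction in the canonical basis. The standard remedy is either a similarity transformation to the eigenbasis of $M$ (whose condition number is absorbed into the constant $B$ on the right-hand side of (\ref{eq:main_sgdm})) or the explicit construction of a positive-definite $P$ with $M^\top P M \preceq (1 - 4\lambda\eta/(1-\alpha))P$; the factors $3$ and $1 + 4\alpha^2 + \alpha^4$ in the variance term of (\ref{eq:main_sgdm}) then emerge from the norm of the third column of $M$ and from bounding the first-coordinate squared error by the full quadratic form. Careful bookkeeping through this change of basis, combined with verifying that the nonlinear residual really does fit inside the contraction gap under assumption~6, is where the bulk of the technical work lies.
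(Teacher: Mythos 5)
Your proposal is correct and follows essentially the same route as the paper: the paper eliminates $\bm{v}_t$ and tracks $(\|\bm{w}_t\|_2^2,\langle\bm{w}_t,\bm{w}_{t-1}\rangle,\|\bm{w}_{t-1}\|_2^2)$ under the second-order recursion $\bm{w}_{t+1}=(1+\alpha-\lambda\eta)\bm{w}_t-\alpha\bm{w}_{t-1}-\eta\bm{g}_t$, which is linearly conjugate to your $(a_t,b_t,c_t)$ system, and it likewise diagonalizes the resulting $3\times3$ map (same characteristic polynomial $\mu^2-(1+\alpha-\lambda\eta)\mu+\alpha$, same spectrum $\{\mu_1^2,\alpha,\mu_2^2\}$, same fixed point and contraction rate), controls the rank-one nonlinear perturbation via assumption~6 (the paper's Lemma~3), and pays the factor $3$ when reading off the first coordinate. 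Two minor points: your $b^*$ should be $+\lambda a^*/(1+\alpha)$ (as forced by your own relation $b^*=\eta c^*/2$ and $c^*>0$; your final equation for $a^*$ is nonetheless correct), and the claim that $a_t$ stays bounded below is not automatic from $h_t>l$ --- the paper needs a separate sign-pattern induction plus its Lemma~2 to establish this, which is part of the technical work you defer.
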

Proof can be seen in Appendix. Like assumption 1, assumption 4 is also satisfied for commonly used hyper-parameter settings. Besides, $\lambda\eta <(1-\sqrt{\alpha})^2$ is also mentioned in \cite{li2019an} for other purpose; Assumption 5 shows not unit gradient gradient norm $||\tilde{\bm{g}}_t||^2_2$ but an adjusted value $\tilde{h}_t$ dominates the expectation and variance of the weight norm square. We empirically find the expectation of $\langle \bm{v}_{t-1},  g_t \rangle$ is very close to $0$, therefore the behavior of $\tilde{h}_t$ is similar to that of $||\tilde{\bm{g}}_t||^2_2$ (see (d) in Figure \ref{fig:fixed_learning_rate}), making the weight norm square linearly converge in SGDM case. We leave theoretical analysis on $\tilde{h}_t$ as future work. As for assumption 6, commonly used settings ($\eta\lambda << 1$) can make $2[\frac{6\lambda \eta }{(1-\alpha)^3(1+\alpha)-8\lambda\eta(1-\alpha)}]^2$ as an very small lower bound for $l/L$. The experiments on justification of assumptions 4,5,6 can be seen in Figure \ref{fig:fixed_learning_rate} and appendix \ref{app:sythetic}. Comparing with Eq.(\ref{eq:main_sgd}) and Eq.(\ref{eq:main_sgdm}), it implies that with same $\eta, \lambda$, SGDM can reach equilibrium condition much faster than SGD, but may have a larger variance, our experiments also verify that(see (b), (e) in Figure \ref{fig:fixed_learning_rate}).

Since the convergence of weight norm in SMD has been proven, we can drive the theoretical value of angular update $\Delta_t$ and its rate of convergence.
\begin{theorem}{\textbf{(The convergence of angular update)}}\label{theom:au}
In SGD case, if $||\tilde{\bm{g}}_t^2||_2$ remains unchanged ($V=0$), $\exists C>0$ we have
\begin{equation}
    |\Delta_t -  \sqrt{2\lambda\eta}| < C(1-\eta\lambda)^t
\end{equation}
In SGDM case, if $\tilde{h}_t$ remains unchanged ($V=0$), $\exists C>0$ we have
\begin{equation}
|\Delta_t -  \sqrt{\frac{2\lambda\eta}{1+\alpha}}| < C(1-\frac{\eta\lambda}{1-\alpha})^t
\end{equation}
\end{theorem}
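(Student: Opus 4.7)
The plan is to leverage the closed-form expression for $\Delta_t$ in terms of the weight norm (and, in the SGDM case, the perpendicular component of the momentum), combined with the linear convergence of $||\bm{w}_t||_2^2$ already established in Theorems~\ref{theom:sgd} and~\ref{theom:sgdm}.

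For the SGD case, I would first apply Lemma~\ref{lemma:1} (specifically $\bm{g}_t\perp\bm{w}_t$) to the update $\bm{w}_{t+1}=(1-\eta\lambda)\bm{w}_t-\eta\bm{g}_t$, obtaining $\langle\bm{w}_t,\bm{w}_{t+1}\rangle=(1-\eta\lambda)||\bm{w}_t||_2^2$ and $||\bm{w}_{t+1}||_2^2=(1-\eta\lambda)^2||\bm{w}_t||_2^2+\eta^2||\bm{g}_t||_2^2$, which yields the exact identity $\tan\Delta_t=\eta||\bm{g}_t||_2/[(1-\eta\lambda)||\bm{w}_t||_2]$. Using $||\tilde{\bm{g}}_t||_2=||\bm{g}_t||_2\cdot||\bm{w}_t||_2$ together with $V=0$ (so $||\tilde{\bm{g}}_t||_2^2=L$) this becomes $\tan\Delta_t=\eta\sqrt{L}/[(1-\eta\lambda)||\bm{w}_t||_2^2]$. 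Theorem~\ref{theom:sgd} with $V=0$ then delivers the deterministic bound $|\,||\bm{w}_t||_2^2-(w^*)^2\,|\le\sqrt{B}(1-4\lambda\eta)^{t/2}$; since $(1-4\lambda\eta)^{1/2}\le 1-\lambda\eta$ for small $\lambda\eta$, and since $||\bm{w}_t||_2^2$ is bounded below by Assumption~3, the Lipschitz map $a\mapsto 1/a$ transfers this to a geometric bound on $|\tan\Delta_t-\sqrt{2\lambda\eta}/(1-\eta\lambda)|$. A Taylor expansion of $\arctan$ near zero together with $\arctan(\sqrt{2\lambda\eta}/(1-\eta\lambda))=\sqrt{2\lambda\eta}+O((\lambda\eta)^{3/2})$ absorbs the residual discrepancy into the constant $C$.

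For the SGDM case, a short expansion of $||\bm{w}_t||_2^2\,||\bm{w}_{t+1}||_2^2-\langle\bm{w}_t,\bm{w}_{t+1}\rangle^2$ from $\bm{w}_{t+1}=\bm{w}_t-\eta\bm{v}_t$ collapses to $\eta^2||\bm{w}_t||_2^2\cdot||\bm{v}_t^{\perp}||_2^2$, where $\bm{v}_t^{\perp}$ denotes the component of $\bm{v}_t$ perpendicular to $\bm{w}_t$; hence $\sin\Delta_t=\eta||\bm{v}_t^{\perp}||_2/||\bm{w}_{t+1}||_2$. Projecting the momentum recursion $\bm{v}_t=\alpha\bm{v}_{t-1}+\bm{g}_t+\lambda\bm{w}_t$ onto $\bm{w}_t^{\perp}$ and invoking $\bm{g}_t\perp\bm{w}_t$ gives, up to a small error from the direction mismatch between $\bm{w}_{t-1}$ and $\bm{w}_t$, the scalar recursion $||\bm{v}_t^{\perp}||_2^2\approx\alpha^2||\bm{v}_{t-1}^{\perp}||_2^2+h_t$. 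Since $V=0$ fixes $h_t=L/||\bm{w}_t||_2^2$, this is a contraction at rate $\alpha^2$ driven by a term that converges to $L/(w^*)^2$, so $||\bm{v}_t^{\perp}||_2^2$ converges linearly to $L/[(1-\alpha^2)(w^*)^2]$. Substituting the explicit $(w^*)^2$ from Theorem~\ref{theom:sgdm} into $\sin\Delta^*=\eta\sqrt{L/(1-\alpha^2)}/(w^*)^2$ simplifies algebraically to $\sqrt{2\lambda\eta/(1+\alpha)}$ at leading order in $\lambda\eta$, and the $\arcsin$ expansion converts geometric convergence of $\sin\Delta_t$ into geometric convergence of $\Delta_t$.

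The main obstacle will be the SGDM step of controlling the auxiliary sequence $||\bm{v}_t^{\perp}||_2^2$ rigorously, since two coupled issues appear: the driving term $h_t$ itself depends on $||\bm{w}_t||_2^2$, tying the analysis back to Theorem~\ref{theom:sgdm}, and the reference direction defining the projection changes every step, so the projection of $\bm{v}_{t-1}$ onto $\bm{w}_{t-1}^{\perp}$ differs from its projection onto $\bm{w}_t^{\perp}$ in general. I plan to handle both by a self-referential bootstrap: the already-controlled smallness $\Delta_{t-1}=O(\sqrt{\lambda\eta})$ bounds the direction-mismatch error, while Assumption~4 ensures $\alpha^2$-contraction dominates the induced $\lambda\eta$ cross-terms. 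The effective rate is then the slower of $\alpha^{2t}$ from momentum contraction and $(1-\lambda\eta/(1-\alpha))^t$ inherited from Theorem~\ref{theom:sgdm}; in the hyperparameter regime of Assumption~4 the latter dominates, matching the rate claimed in the theorem.
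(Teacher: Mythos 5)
Your SGD argument is essentially the paper's own proof: the paper computes $\cos^2\Delta_t=(1-2\lambda\eta)\,\|\bm{w}_t\|_2^2/\|\bm{w}_{t+1}\|_2^2$ and feeds in the linear convergence of $\|\bm{w}_t\|_2^2$ from Theorem~\ref{theom:sgd} with $V=0$; your $\tan\Delta_t$ identity is the same computation in different trigonometric clothing, and both versions share the same (unstated in the paper, stated by you) caveat that the limit is really $\sqrt{2\lambda\eta}+O((\lambda\eta)^{3/2})$, so the discrepancy is absorbed only up to the paper's standing convention of dropping higher-order terms in $\lambda\eta$. Your SGDM argument, however, is a genuinely different route. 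The paper never introduces $\bm{v}_t^{\perp}$: its proof of Theorem~\ref{theom:sgdm} already tracks the three-dimensional state $\bm{X}_t=(\|\bm{w}_t\|_2^2,\langle\bm{w}_t,\bm{w}_{t-1}\rangle,\|\bm{w}_{t-1}\|_2^2)$, so linear convergence of the inner product $\langle\bm{w}_{t+1},\bm{w}_t\rangle$ comes for free from $\bm{Y}_t\to\bm{Y}^*$, and $\cos^2\Delta_t$ converges by the same one-line quotient as in SGD --- no auxiliary sequence, no projection bookkeeping. Your route instead isolates $\sin\Delta_t=\eta\|\bm{v}_t^{\perp}\|_2/\|\bm{w}_{t+1}\|_2$ and contracts $\|\bm{v}_t^{\perp}\|_2^2$ at rate $\alpha^2$; this is more geometrically transparent and has the nice side effect of explaining \emph{why} the paper's otherwise mysterious quantity $h_t=\|\bm{g}_t\|_2^2+2\alpha\langle\bm{v}_{t-1},\bm{g}_t\rangle$ is the natural driver (it is exactly the forcing term in the perpendicular-momentum recursion), and your fixed-point algebra $\sin\Delta^*=\eta\sqrt{L/(1-\alpha^2)}/(w^*)^2=\sqrt{2\lambda\eta/(1+\alpha)}$ checks out. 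The price is the two coupled difficulties you correctly identify --- the drifting projection direction and the dependence of $h_t$ on $\|\bm{w}_t\|_2^2$ --- which force the bootstrap you sketch; that bootstrap should close (the mismatch error is controlled by $\Delta_{t-1}=O(\sqrt{\lambda\eta})$ times the radial component of $\bm{v}_{t-1}$, itself $O(\lambda\|\bm{w}\|_2^2/\eta)$ in equilibrium), but it is substantially more work than reusing the inner-product coordinate the paper has already paid for, and you would need to carry it out explicitly for the proof to be complete at the paper's level of rigor.
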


\begin{remark}
Theoretical value of angular update in Theorem \ref{theom:au} is consistent with \citet{chiley2019online,li2019an} respectively. Note that we only present the convergence rate of angular update in ideal cases, i.e. $||\tilde{\bm{g}}_t^2||_2$ or $\tilde{h}_t$ remain unchanged across the whole training process, because though Eq.(\ref{eq:main_sgd}) and (\ref{eq:main_sgdm}) imply that the variance of weight norm square is very small ($O(\eta^2$)) with small learning rate, this bound is not tight enough to properly estimate the variance of angular update. Numerical experiments(see Figure \ref{fig:simulation}, \ref{fig:simulation_sgdm} in appendix \ref{app:sythetic}) show the variance of weight norm and angular update can be extremely small, we leave this theoretical gap as future work.
\end{remark}

\section{Experiments}
In this section, we show that our theorems can agree well with empirical observations on ImageNet~\citep{russakovsky2015imagenet} and MSCOCO~\citep{lin2014microsoft}. We conduct experiments in two cases. In the first case we train neural network with fixed learning rate to verify our theorems in SGD and SGDM, respectively; in the second case we investigate SMD with more commonly used settings, multi-stage learning rate schedule.

\subsection{Fixed Learning Rate}
With fixed learning rate, we train Resnet50~\citep{he2016deep} with SGD/SGDM on ImageNet. Learning rate is fixed as $\eta = 0.2$; WD factor is $\lambda=10^{-4}$; with SGDM, the momentum factor is $\alpha=0.9$. Figure \ref{fig:fixed_learning_rate} presents the unit gradient norm square, weight norm, and angular update of the weights from \textsc{layer.2.0.conv2} of Resnet50 in SGD and SGDM cases, respectively. It can be inferred from Figure \ref{fig:fixed_learning_rate} the behavior of $||\tilde{\bm{g}}_t||_2^2$, $\tilde{h}_t$ and hyper-parameter settings satisfy our assumptions in Theorem \ref{theom:sgd} and \ref{theom:sgdm}, therefore theoretical value of weight norm and angular update agree with empirical value very well. We also can observe SGDM can achieve equilibrium more quickly than SGD. According to Eq.(\ref{eq:main_sgd}),(\ref{eq:main_sgdm}), the underlying reason might be with same learning rate $\eta$ and WD factor $\lambda$, convergence rate of SGDM ($1-\frac{ \lambda\eta}{1-\alpha})$ is smaller than that of SGD ($1-\lambda\eta$).

\begin{figure}[ht]
\vspace{-0.5cm}
\centering
\subfigure[Unit gradient norm in SGD]{
\begin{minipage}[t]{0.30\linewidth}
\centering
\includegraphics[width=1.6in]{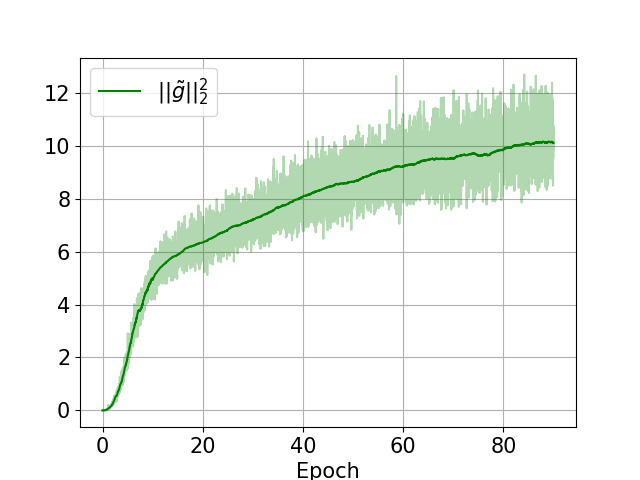}
\label{fig:ug_sgd}
\end{minipage}
}
\subfigure[Weight norm in SGD]{
\begin{minipage}[t]{0.30\linewidth}
\centering
\includegraphics[width=1.6in]{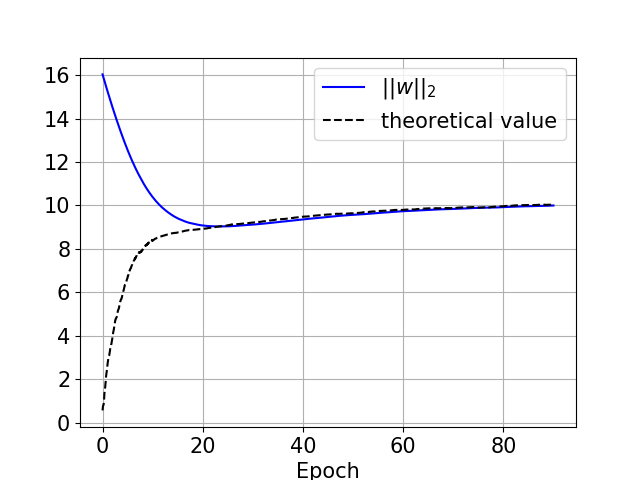}
\label{fig:w_sgd}
\end{minipage}
}
\subfigure[Angular update in SGD]{
\begin{minipage}[t]{0.30\linewidth}
\centering
\includegraphics[width=1.6in]{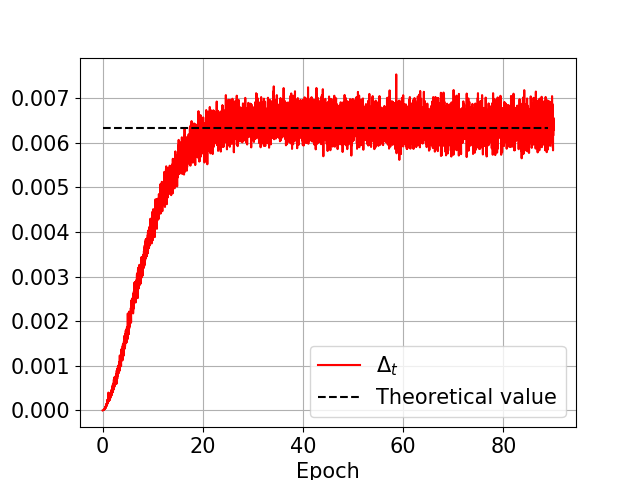}
\label{fig:au_sgd}
\end{minipage}
}

\subfigure[Unit gradient norm square and $\tilde{h}$ in SGDM]{
\begin{minipage}[t]{0.30\linewidth}
\centering
\includegraphics[width=1.6in]{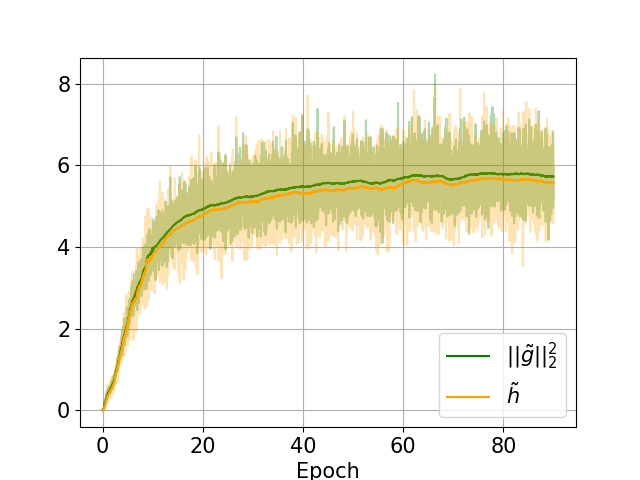}
\label{fig:ug_sgdm}
\end{minipage}
}
\subfigure[Weight norm in SGDM]{
\begin{minipage}[t]{0.30\linewidth}
\centering
\includegraphics[width=1.6in]{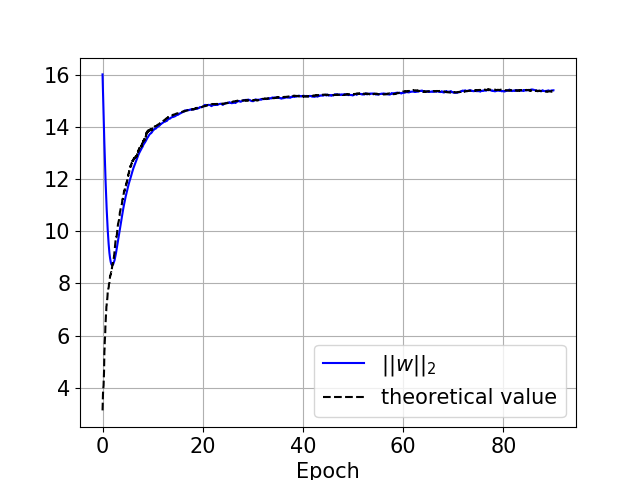}
\label{fig:w_sgdm}
\end{minipage}
}
\subfigure[Angular update in SGDM]{
\begin{minipage}[t]{0.30\linewidth}
\centering
\includegraphics[width=1.6in]{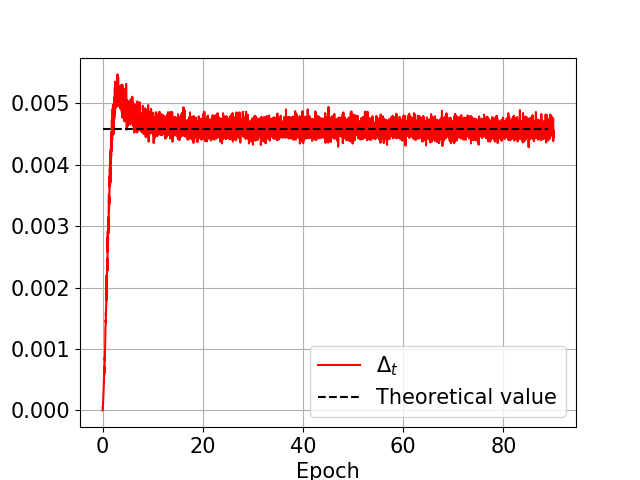}
\label{fig:au_sgdm}
\end{minipage}
}
\caption{Performance of \textsl{layer.2.0.conv2} from Resnet50 in SGD and SGDM, respectively. In (a), (d), semitransparent line represents the raw value of $||\tilde{\bm{g}}_t||_2^2$ or $\tilde{h}_t$, while solid line represents the averaged value within consecutive 200 iterations to estimate the expectation of $||\tilde{\bm{g}}_t||_2^2$ or $\tilde{h}_t$; In (b), (e), blue solid lines shows the raw value of weight norm $||\bm{w}_t||_2$, while dashed line represent the theoretical value of weight norm computed in Theorem \ref{theom:sgd}, \ref{theom:sgdm} respectively. The expectations of $||\tilde{\bm{g}}||_2^2$ and $\tilde{h}$ are estimated as (a) and (d) respectively; In (c), (f), red lines represent raw value of angular update during training, dashed lines represent the theoretical value of angular update computed by $\sqrt{2\lambda\eta}$ and $\sqrt{2\lambda\eta/(1+\alpha)}$ respectively.}
\label{fig:fixed_learning_rate}
\end{figure}

\subsection{Multi-stage Learning Rate Schedule}
Now we turn to study the behavior of angular update with SGDM and multi-stage leanring rate schedule on Imagenet~\citep{russakovsky2015imagenet} and MSCOCO~\citep{lin2014microsoft}. In ImageNet classification task, we still adopt Resnet50 as baseline for it is a widely recognized network structure.The training settings rigorously follow \citet{goyal2017accurate}: learning rate is initialized as $0.1$, and divided by $10$ at $30$, $60$, $80$-th epoch; the WD factor is $10^{-4}$; the momentum factor is $0.9$. In MSCOCO experiment, we conduct experiments on Mask-RCNN~\citep{he2017mask} benchmark using a Feature Pyramid Network (FPN)~\citep{lin2017feature}, ResNet50 backbone and SyncBN~\citep{peng2018megdet} following the 4x setting in \citet{he2018rethinking}: total number of iteration is $360,000$, learning rate is initialized as $0.02$, and divided by $10$ at iteration $300,000$, $340,000$; WD coefficient is $10^{-4}$.

\begin{figure}[ht]
\vspace{-1cm}
\centering
\subfigure[Angular update in Imagenet]{
\begin{minipage}[t]{0.30\linewidth}
\centering
\includegraphics[width=1.6in]{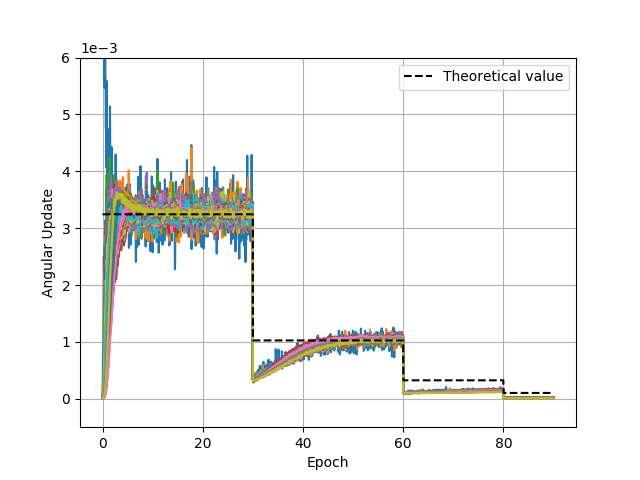}
\label{fig:au_imagenet}
\end{minipage}
}
\subfigure[Angular update in Imagenet (rescaled)]{
\begin{minipage}[t]{0.30\linewidth}
\centering
\includegraphics[width=1.6in]{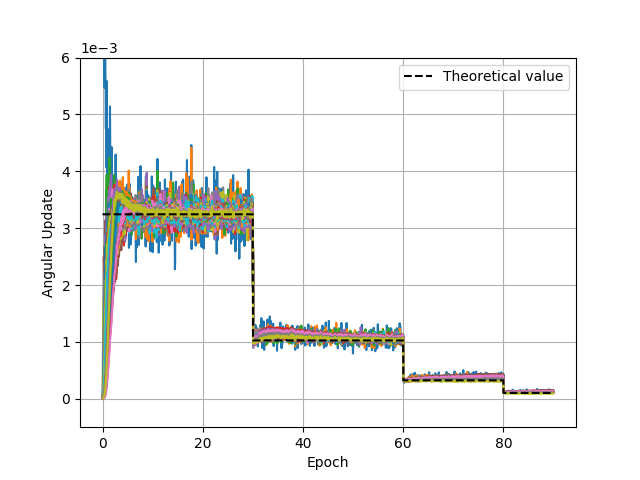}
\label{fig:au_imagenet_rescale}
\end{minipage}
}
\subfigure[Weight norm in Imagenet]{
\begin{minipage}[t]{0.30\linewidth}
\centering
\includegraphics[width=1.6in]{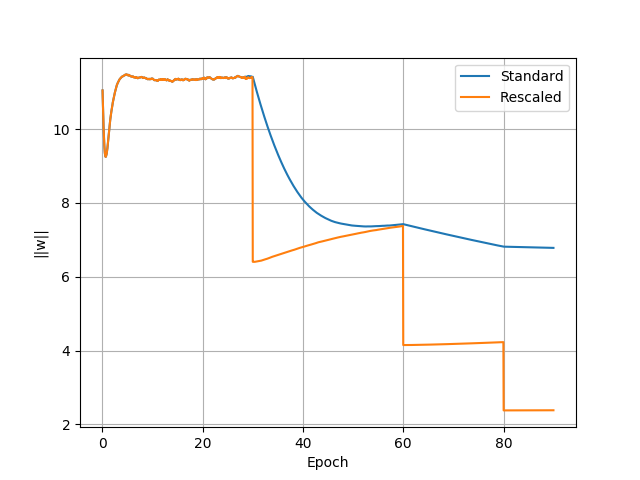}
\label{fig:w_imagenet}
\end{minipage}
}

\subfigure[Angular update in MSCOCO]{
\begin{minipage}[t]{0.30\linewidth}
\centering
\includegraphics[width=1.6in]{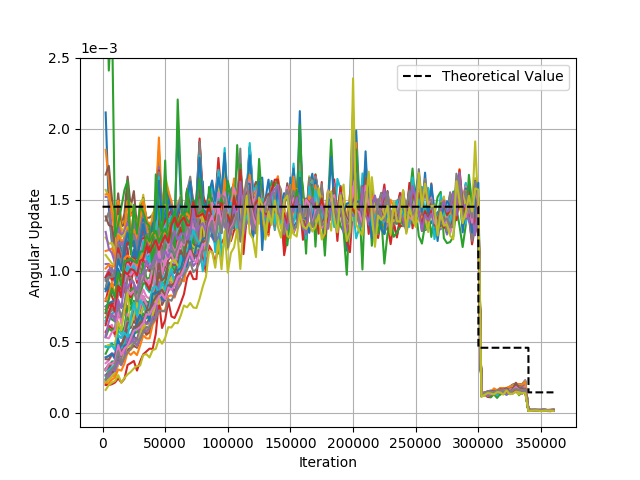}
\label{fig:au_coco}
\end{minipage}
}
\subfigure[Angular update in MSCOCO (rescaled)]{
\begin{minipage}[t]{0.30\linewidth}
\centering
\includegraphics[width=1.6in]{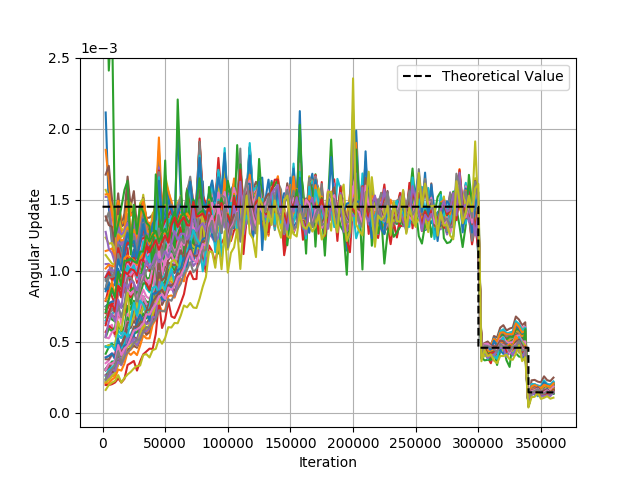}
\label{fig:au_coco_rescale}
\end{minipage}
}
\subfigure[Weight norm in MSCOCO]{
\begin{minipage}[t]{0.30\linewidth}
\centering
\includegraphics[width=1.6in]{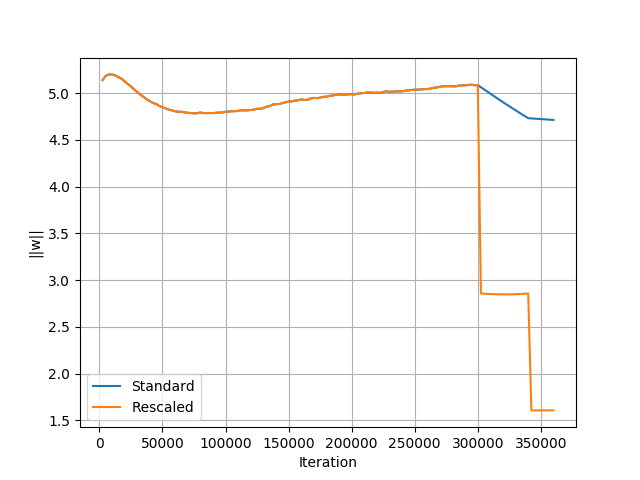}
\label{fig:w_coco}
\end{minipage}
}
\caption{In (a),(b),(d),(e), solid lines with different colors represent raw value of angular update of weights from all convolutional layer in the model; In (a), (d), training setting rigorously follows \citet{goyal2017accurate,he2018rethinking} respectively; In (b), (e), weight norm is dividied by $\sqrt[4]{10}$ as long as learning rate is divided by $10$; In (c), (f), weight norm is computed on \textsl{layer.1.0.conv2} in Resnet50 backbone. Blue line represent original settings, orange line represent rescaled settings.}
\label{fig:multi_learning_rate}
\vspace{-0.3cm}
\end{figure}

There appears to be some mismatch between theorems and empirical observations in (a), (b) of Figure \ref{fig:multi_learning_rate}: angular update in the last two stages is smaller than its theoretical value. This phenomenon can be well interpreted by our theory: according to Theorem \ref{theom:sgd}, \ref{theom:sgdm}, when equilibrium condition is reached, theoretical value of weight norm satisfies $||\bm{w}_t||_2 \propto \sqrt[4]{\frac{\eta}{\lambda}}$, therefore when learning rate is divided by $k$, equilibrium condition is broken, theoretical value of weight norm in new equilibrium condition should get $\sqrt[4]{1/k}$ smaller. But new equilibrium condition cannot be reached immediately (see (c), (f) in Figure \ref{fig:multi_learning_rate}), following corollary gives the least number of iterations to reach the new equilibrium condition. 
\begin{corollary}\label{cor:trans_iter}
In SGD case with learning rate $\eta$, WD coefficient $\lambda$, if learning rate is divided by $k$, and unit gradient norm remains unchanged, then at least $\lceil [\log(k)]/(4\lambda\eta)\rceil$ iterations are required to reach the new equilibrium condition; In SGDM case with momentum coefficient $\alpha$, then at least $\lceil [\log(k)(1-\alpha)]/(4\lambda\eta)\rceil$ iterations are required to reach the new equilibrium condition.
\end{corollary}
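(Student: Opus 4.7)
The plan is to apply Theorems~\ref{theom:sgd} and~\ref{theom:sgdm} to the fresh trajectory that starts at the moment the learning rate is divided by $k$, using the post-drop learning rate $\eta'=\eta/k$. First, I would identify the new target: since those theorems give $w^*\propto \eta^{1/4}$ and the constant unit-gradient-norm hypothesis keeps $L$ fixed, the new equilibrium weight-norm square is $(w^*_{\mathrm{new}})^2=(w^*_{\mathrm{old}})^2/\sqrt{k}$, so the dynamics must transport $\|\bm{w}_t\|_2^2$ from its pre-drop value $(w^*_{\mathrm{old}})^2$ down to $(w^*_{\mathrm{old}})^2/\sqrt{k}$.

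Second, I would invoke the exponential-decay part of Theorems~\ref{theom:sgd} and~\ref{theom:sgdm}, plugging the initial squared deviation $B=((w^*_{\mathrm{old}})^2-(w^*_{\mathrm{new}})^2)^2\approx k\,(w^*_{\mathrm{new}})^4$ into their bounds. Ignoring the $O(\eta'^2)$ noise floor, they give $\mathbb{E}[\|\bm{w}_T\|_2^2-(w^*_{\mathrm{new}})^2]^2 \lesssim B(1-4\lambda\eta')^{T}$ for SGD and the same with $(1-4\lambda\eta'/(1-\alpha))^{T}$ for SGDM. Declaring the new equilibrium ``reached'' when the expected squared deviation has shrunk to the scale $(w^*_{\mathrm{new}})^4$ of the new equilibrium itself is equivalent to requiring the decay factor to kill one factor of $k$, i.e.\ $(1-4\lambda\eta')^{T}\le 1/k$. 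Using $-\log(1-x)\approx x$ for small $x$, this becomes $T\ge \log(k)/(4\lambda\eta')$ for SGD and $T\ge (1-\alpha)\log(k)/(4\lambda\eta')$ for SGDM; identifying $\eta'$ with the learning rate $\eta$ that appears in the corollary and taking ceilings yields the two claimed expressions.

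The only real subtlety is interpretational rather than technical: since Theorems~\ref{theom:sgd} and~\ref{theom:sgdm} control only expected squared deviation, one must fix what it quantitatively means to ``reach the new equilibrium''. Pegging the tolerance to $(w^*_{\mathrm{new}})^4$ is the natural choice and gives the clean $\log(k)$ factor; any constant-factor perturbation of the tolerance only shifts an additive constant inside the logarithm that the ceiling absorbs. A routine auxiliary check, essentially re-tracing the one-step recursion in the proofs of Theorems~\ref{theom:sgd}/\ref{theom:sgdm}, is that the initial condition at the drop satisfies the hypotheses of those theorems with $B$ replaced by the initial squared deviation derived above.
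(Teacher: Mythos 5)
Your plan arrives at the right formulas, but the logic runs in the wrong direction for what the corollary asserts. The corollary is a necessity claim: \emph{at least} $\lceil\log(k)/(4\lambda\eta)\rceil$ iterations are \emph{required}. Proving that requires a lower bound on how slowly $\|\bm{w}_t\|_2^2$ can possibly decrease, so that fewer iterations provably cannot bridge the gap between the old equilibrium value and the new one (which is smaller by the factor $\sqrt{k}$ you correctly identify). What you propose instead is to track when the \emph{upper bound} $B(1-4\lambda\eta')^{T}$ from Theorems~\ref{theom:sgd} and \ref{theom:sgdm} has decayed below a chosen tolerance. That is a sufficiency-type statement about the bound, not about the trajectory: if the upper bound is still above your tolerance at time $T$, nothing prevents the actual deviation from already being below it, so no minimum iteration count follows. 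The fact that your numbers coincide with the paper's is because the theorems' contraction rate $(1-4\lambda\eta)$ is the square of the true per-step factor and your tolerance $1/k$ is the square of the required ratio $1/\sqrt{k}$ --- the two squarings cancel --- not because the arguments are equivalent.

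The missing ingredient is the elementary one-sided bound the paper uses: since the stochastic-gradient (``centrifugal'') term in \Eqref{eq:iter_sgd} is nonnegative, $\|\bm{w}_{t+1}\|_2^2\geq(1-2\lambda\eta)\|\bm{w}_t\|_2^2$ holds deterministically at every step, hence $\|\bm{w}_{t+T}\|_2^2\geq(1-2\lambda\eta)^{T}\|\bm{w}_t\|_2^2$, and demanding that this ratio reach $1/\sqrt{k}$ forces $T\geq\log(k)/(4\lambda\eta)$. In the SGDM case the analogous per-step inequality $\|\bm{w}_{t+1}\|_2^2>(1-\frac{2\lambda\eta}{1-\alpha})\|\bm{w}_t\|_2^2$ comes from \Eqref{eq:update_w_sgdm}, i.e.\ from the inductive inequalities established inside the proof of Theorem~\ref{theom:sgdm}, not from its final convergence bound. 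If you reroute your argument through these per-step lower bounds (keeping your correct observation that $(w^*_{\mathrm{new}})^2=(w^*_{\mathrm{old}})^2/\sqrt{k}$), you recover the paper's proof.
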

Corollary \ref{cor:trans_iter} also implies SGD/SGDM with smaller learning rate requires more iterations to reach new equilibrium condition. Hence, in second learning rate stage in Imagenet experiments, angular update can reach its new theoretical value within 15 epochs, but during last two learning rate stages of Imagenet/MSCOCO experiments, SGDM cannot completely reach new equilibrium by the end of training procedure. As a result, we observe empirical value of angular update seems smaller than its theoretical value. Based on our theorem, we can bridge the gap by skipping the intermediate process from old equilibrium to new one. Specifically, when learning rate is divided by $k$, norm of scale-invariant weight is also divided by $\sqrt[4]{k}$, SGDM can reach in new equilibrium immediately. Experiments((b),(e) in Figure \ref{fig:multi_learning_rate}) show this simple strategy can make angular update always approximately equal to its theoretical value across the whole training process though learning rate changes.

\section{Conclusion}
In this paper, we comprehensively reveal the learning dynamics of DNN with normalization, WD, and SGD/SGDM, named as Spherical Motion Dynamics (SMD). Different from most related works~\citep{laarhoven2017l2,hoffer2018norm,chiley2019online,li2019an}, we directly explore the cause of equilibrium. Specifically, we introduce the assumptions that can lead to equilibrium condition, and show these assumptions are easily satisfied by practical implementation of DNN; Under given assumptions, we prove equilibrium condition can be reached at linear rate, far before optimization has converged. Most importantly, we show our theorem is widely valid, they can be verified on one of most challenging computer vision tasks, beyond synthetic datasets. We believe our theorems on SMD can bridge the gap between current theoretical progress and practical usage on deep learning techniques.

\bibliography{iclr2021_conference}
\bibliographystyle{iclr2021_conference}

\newpage
\appendix
\section{Proof of Theorems}
\begin{remark}
In the following context, we will use the following conclusion multiple times: $\forall \delta, \varepsilon \in \mathbb{R}$, if $|\delta| \ll 1, |\varepsilon| \ll 1$, then we have:
\begin{equation}
    (1+\delta)^2 \approx 1 + 2\delta, \quad  \sqrt{1+\delta} \approx 1 + \frac{\delta}{2}, \quad \frac{1}{1+\delta} \approx 1-\delta, \quad (1+\delta)(1+\varepsilon) \approx 1+\delta + \varepsilon.
\end{equation}
\end{remark}

\subsection{Proof of Lemma 1}
\begin{proof}
Given $\bm{w}_0\in \mathbb{R}^p \backslash \{\bm{0}\}$, since $\forall k > 0, \mathcal{L}(\bm{w}_0) = \mathcal{L}(k\bm{w}_0)$, then we have
\begin{eqnarray}
    \frac{\partial \mathcal{L}(\bm{w})}{\partial \bm{w}}\Big|_{\bm{w}=\bm{w}_0} &=& \frac{\partial \mathcal{L}(k\bm{w})}{\partial \bm{w}}\Big|_{\bm{w}=\bm{w}_0} = \frac{\partial \mathcal{L}(\bm{w})}{\partial \bm{w}}\Big|_{\bm{w}=k\bm{w}_0} \cdot k \\
    \frac{\partial \mathcal{L}(\bm{kw})}{\partial k}\Big|_{\bm{w}=\bm{w}_0} &=& \langle \frac{\partial \mathcal{L}(\bm{w})}{\partial \bm{w}}\Big|_{\bm{w}=k\bm{w}_0}, \bm{w}_0 \rangle = \frac{1}{k} \cdot \langle \frac{\partial \mathcal{L}(\bm{w})}{\partial \bm{w}}\Big|_{\bm{w}=\bm{w}_0}, \bm{w}_0 \rangle = 0
\end{eqnarray}
\end{proof}

\subsection{Proof of Theorem 1}\label{proof:them1}

\begin{lemma}\label{lemma:2}
If the sequence $\{ x_t \}_{t=1}^\infty$ satisfies
\begin{equation}
    x_t \geq \alpha x_{t-1} + \frac{L}{x_{t-1}}
\end{equation}.
where $x_1 > 0$, $L > 0$
\end{lemma}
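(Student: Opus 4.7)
The statement of the lemma appears truncated (no conclusion is given), but from the way it is plugged into the proof of Theorem \ref{theom:sgd} one can infer that its role is to supply a uniform positive lower bound on the squared weight norm. The SGD recursion, using $\langle \bm{w}_t, \bm{g}_t\rangle = 0$ from Lemma \ref{lemma:1}, gives
\begin{equation}
\|\bm{w}_{t+1}\|_2^2 = (1-\lambda\eta)^2 \|\bm{w}_t\|_2^2 + \frac{\eta^2 \|\tilde{\bm{g}}_t\|_2^2}{\|\bm{w}_t\|_2^2},
\end{equation}
so taking $x_t = \|\bm{w}_t\|_2^2$, $\alpha = (1-\lambda\eta)^2$ and $L = \eta^2 l$ (using assumption 3, $\|\tilde{\bm{g}}_t\|_2^2 > l$) turns the inequality in Lemma \ref{lemma:2} into a lower bound on $x_t$. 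The natural conclusion I will aim for is $x_t \geq x^\dagger$ for all $t$, where $x^\dagger := \min\bigl(x_1, 2\sqrt{\alpha L}\bigr)$, or more sharply $x_t \geq \min\bigl(x_1, \sqrt{L/(1-\alpha)}\bigr)$, i.e. the sequence cannot escape through zero.

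The plan is to analyze the one-variable map $f(x) = \alpha x + L/x$ on $(0,\infty)$. First I would compute $f'(x) = \alpha - L/x^2$, so $f$ is strictly decreasing on $(0,\sqrt{L/\alpha}]$ and strictly increasing on $[\sqrt{L/\alpha},\infty)$, with global minimum $f_{\min} = 2\sqrt{\alpha L}$; the unique positive fixed point is $x^\star = \sqrt{L/(1-\alpha)}$, obtained from $(1-\alpha)(x^\star)^2 = L$. A short algebraic check shows $x^\star \geq \sqrt{L/\alpha}$ iff $\alpha \geq 1/2$, and in either regime $f_{\min} \leq x^\star$.

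Then I would do a case analysis in a single induction step. If $x_{t-1} \geq x^\star$, the increasing branch plus $f(x^\star)=x^\star$ gives $x_t \geq f(x_{t-1}) \geq f(x^\star) = x^\star$. If $x_{t-1} \in (0,x^\star)$, the trickier case, I would split on whether $x_{t-1}$ is below or above $\sqrt{L/\alpha}$; in the former the decreasing branch yields $x_t \geq f(\sqrt{L/\alpha}) = 2\sqrt{\alpha L}$, in the latter $f(x)\geq x$ on $[\sqrt{L/\alpha},x^\star]$ (since $f-\mathrm{id}$ is concave, vanishes at $x^\star$, and is positive at $\sqrt{L/\alpha}$), so $x_t \geq x_{t-1}$. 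Combining the cases, $x_t \geq \min(x_{t-1}, f_{\min})$, and a simple induction on $t$ closes the argument.

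The main obstacle I foresee is the regime $\alpha < 1/2$ together with an initial condition $x_1$ that lies strictly between $f_{\min}$ and $x^\star$: here neither $x_t \geq x_{t-1}$ nor $x_t \geq x^\star$ need hold in a single step, and one has to track the two possible sub-intervals carefully to rule out oscillatory decay toward zero. This is resolved by the observation that, once $x_{t-1} \leq \sqrt{L/\alpha}$, the very next iterate jumps back above $\sqrt{L/\alpha}$ because $f(x)\geq f_{\min}$ and $f_{\min}\geq \sqrt{L/\alpha}$ is equivalent to $\alpha \leq 2\alpha$, which is automatic; so the trajectory is eventually trapped in $[\sqrt{L/\alpha},\infty)$, where monotonicity of $f$ yields the clean bound $x_t \geq \min(x_1, f_{\min})$ for all $t$, giving the lower bound that the subsequent Lyapunov-style argument for Theorem \ref{theom:sgd} requires.
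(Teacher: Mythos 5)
You correctly inferred the lemma's role, but the conclusion you chose to prove is strictly weaker than the one the paper actually states (the conclusion is displayed right after the lemma environment): the paper proves the \emph{time-dependent} bound $x_t \geq \sqrt{L/(1-\alpha)} - \alpha^{t-1}\,|\sqrt{L/(1-\alpha)} - x_1|$, i.e.\ the lower bound improves geometrically at rate $\alpha$ and approaches the fixed point $x^\star = \sqrt{L/(1-\alpha)}$. Your static bound $x_t \geq \min(x_1, 2\sqrt{\alpha L})$ is true but cannot do the job required in the proof of Theorem~\ref{theom:sgd}: there one needs that after an explicitly computable number of iterations $T(\alpha,\lambda,\eta,l,x_0)$ the iterate satisfies $x_t > \sqrt{l\eta/(4\lambda)}$ \emph{regardless of how small $x_1$ is}, since this lower bound is what controls the terms $L\eta^2/(x^\star x_t)$ and $V\eta^4/x_t^2$ in the contraction estimate. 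If $x_1$ is far below equilibrium, $\min(x_1, 2\sqrt{\alpha L}) = x_1$ forever and certifies nothing about the iterate rising toward $x^\star$. The missing ingredient is the one-step identity, valid for $0 < x < x^\star$: $x^\star - f(x) = \bigl(\alpha - \sqrt{L(1-\alpha)}/x\bigr)(x^\star - x) < \alpha\,(x^\star - x)$, using $\sqrt{L(1-\alpha)}\,x^\star = L$ and $(1-\alpha)x^\star = \sqrt{L(1-\alpha)}$; iterating this (together with your observation that $f(x)\geq x$ on $(0,x^\star]$, so the sequence stays below $x^\star$ until it crosses, and stays above afterwards) gives the geometric bound.

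Two smaller points. First, your claim that $f_{\min} \geq \sqrt{L/\alpha}$ is ``automatic'' is an algebra slip: $2\sqrt{\alpha L} \geq \sqrt{L/\alpha}$ is equivalent to $\alpha \geq 1/2$, not to $\alpha \leq 2\alpha$, so your trapping argument for the oscillatory regime does not go through as written (the conclusion $x_t \geq \min(x_1, 2\sqrt{\alpha L})$ can still be salvaged by the case split $x_{t-1}\le x^\star$ versus $x_{t-1}>x^\star$). Second, you are right to worry about $\alpha < 1/2$: the paper's step ``$x_t \geq x^\star \Rightarrow x_{t+1}\geq x^\star$'' implicitly uses monotonicity of $f$ on $[x^\star,\infty)$, which requires $x^\star \geq \sqrt{L/\alpha}$, i.e.\ $\alpha \geq 1/2$; this holds in the intended application ($\alpha = 1-2\lambda\eta$ close to $1$) but is not stated as a hypothesis of the lemma.
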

Then, we have 
\begin{equation}
    x_t \geq \sqrt{\frac{L}{1-\alpha}} - \alpha^{t-1}|\sqrt{\frac{L}{1-\alpha}} - x_1|
\end{equation}
\begin{proof}
If $x_t \geq \sqrt{\frac{L}{1-\alpha}}$, since $\sqrt{L/(1-\alpha)}\geq 2\sqrt{\alpha L}$ then we have 
\begin{equation}
    x_{t+1} \geq \alpha x_t +\frac{L}{x_t} \geq \alpha \sqrt{\frac{L}{1-\alpha}} + \frac{L}{\sqrt{L/(1-\alpha)}}=\sqrt{\frac{L}{1-\alpha}},
\end{equation}
which means $\forall k>t, x_k \geq \sqrt{L/(1-\alpha)}$. If $x_t < \sqrt{L/(1-\alpha)}$, then we have
\begin{equation}
    \sqrt{\frac{L}{1-\alpha}} - x_{t+1} \leq (\alpha - \frac{\sqrt{L(1-\alpha)}}{x_t})(\sqrt{\frac{L}{1-\alpha}} - x_{t}) < \alpha (\sqrt{\frac{L}{1-\alpha}} - x_{t}).
\end{equation}
Therefore, by deduction we can easily prove if $x_T < \sqrt{L/(1-\alpha)}$, then $\forall t \in [1, T-1]$, we have 
\begin{eqnarray}
    0 < x_t < x_{t+1} &<& X_T <  \sqrt{\frac{L}{1-\alpha}}, \\
    (\sqrt{\frac{L}{1-\alpha}} - x_t) &<& \alpha^{t-1}(\sqrt{\frac{L}{1-\alpha}} - x_1).
\end{eqnarray}
In summary, we have 
\begin{equation}
    x_t \geq \sqrt{\frac{L}{1-\alpha}} - \alpha^{t-1}|\sqrt{\frac{L}{1-\alpha}} - x_1|
\end{equation}
\end{proof}

\begin{proof}[Proof of Theorem 1]
since $\langle \bm{w}_{t}, \bm{g}_{t}\rangle =0$, then we have:
\begin{equation}\label{eq:update_w_sgd}
    ||\bm{w}_{t+1}||^2_2 = (1-\eta\lambda)^2 ||\bm{w}_{t}||^2_2 + \frac{||\Tilde{\bm{g}}_{t}||^2_2\eta^2}{||\bm{w}_{t}||^2_2}
\end{equation}
Denote $x_t$ as $||\bm{w}_{t}||^2_2$, $L_t$ as $||\Tilde{\bm{g}}_{t}||^2_2$ and omit $O((\eta\lambda)^2)$ part. Then $L_t > l$, $\mathbb{E}[L_t|x_t]=L$, $Var(L_t|x_t)=\mathbb{E}[(L_t-L)^2|x_t] < V$. Eq.(\ref{eq:update_w_sgd}) is equivalent to
\begin{equation}\label{eq:iter_sgd}
    x_{t+1} = (1-2\lambda\eta)x_t + \frac{L_t\eta^2}{x_t}.
\end{equation}
According to Lemma \ref{lemma:2}, we have
\begin{equation}\label{eq:lowerbound}
    x_t > \sqrt{\frac{l\eta}{2\lambda}} - (1-2\lambda\eta)^{t-1}|x_0 - \sqrt{\frac{l\eta}{2\lambda}}|.
\end{equation}
Eq.(\ref{eq:lowerbound}) implies when $t> 1 +  \frac{log((\sqrt{2}-1)\sqrt{l\eta/(4\lambda)}) - log(|x_0 - \sqrt{l\eta/(4\lambda)}|)}{log(1-2\lambda\eta)}$, we have $x_t > \sqrt{\frac{l\eta}{4\lambda}}$.

Now, denote $x^*$ as the $\sqrt{\frac{L\eta}{2\lambda}}$, then we have
\begin{equation}\label{eq:iter_var_sgd}
\begin{split}
    \mathbb{E}[(x_{t+1}-x^*)^2|x_t] &= \mathbb{E}[((1-2\lambda\eta-\frac{L\eta^2}{x_tx^*})(x_t-x^*) + \frac{L_t-L}{x_t})^2|x_t] \\
    &= (1-2\lambda\eta-\frac{L\eta^2}{x_t x^*})^2(x_t-x^*)^2 + \frac{\mathbb{E}[(L_t-L)^2|x_t] \eta^4}{x_t^2}
\end{split}
\end{equation}
If $t > T(\alpha,\lambda, \eta, l, x_0) = 1 +  \frac{log((\sqrt{2}-1)\sqrt{l\eta/(4\lambda)}) - log(|x_0 - \sqrt{l\eta/(4\lambda)}|)}{log(1-2\lambda\eta)}$, Eq.(\ref{eq:lowerbound}) implies
\begin{equation}
     1 -2\lambda\eta - \sqrt{\frac{2L}{l}}\cdot 2\lambda\eta < 1- 2\lambda\eta - \frac{L\eta^2}{x^*x_t}.
\end{equation}
Combining with assumption 3 in Theorem 1, we can derive $1 -2\lambda\eta - \sqrt{\frac{2L}{l}}\cdot 2\lambda\eta > 0$, which means
\begin{equation}\label{eq:positive_condition}
    0 < 1-2\lambda\eta - \frac{L}{x^* x_t}.
\end{equation}
Combining with Eq.(\ref{eq:iter_var_sgd}), (\ref{eq:lowerbound}), (\ref{eq:positive_condition}), if $t > T(\alpha, \lambda, \eta, l, x_0)$, we have
\begin{equation}
    \mathbb{E}[(x_{t+1}-x^*)^2|x_t] < (1-2\lambda\eta)^2(x_t-x^*)^2 + \frac{4V\eta^3\lambda}{l}.
\end{equation}
Considering the expectation with respect to the distribution of $x_t$, we have
\begin{equation}\label{eq:iter2_var_sgd}
    \mathbb{E}(x_{t+1}-x^*)^2 < (1-2\lambda\eta)^2 \mathbb{E}(x_t-x^*)^2 + \frac{4V\eta^3\lambda}{l}.
\end{equation}
Approximate $(1-2\lambda\eta)^2 = 1-4\lambda\eta$, and iterate Eq.(\ref{eq:iter2_var_sgd}) for $t-T(\alpha, \lambda, \eta, l, x_0)$ times, we have
\begin{equation}
    \mathbb{E}(x_{t}-x^*)^2 < (1-4\lambda\eta)^{t-T(\alpha, \lambda, \eta, l, x_0)} \mathbb{E}(x_{T(\alpha, \lambda, \eta, l, x_0)}-x^*)^2 + \frac{V\eta^2}{l}.
\end{equation}

$T(\alpha, \lambda, \eta, l, x_0)$ is finite, and only depends on $\alpha,\lambda, \eta, l, x_0$. Hence we can set $B = max\{ (1-4\lambda\eta)^{-t} \mathbb{E}(x_{t}-x^*)^2 | t=0,1,2,...,T(\alpha, \lambda, \eta, l, x_0)\}$, note we can prove $B$ is finite by 
\begin{equation}
    \mathbb{E}x_{t+1}^2 = \mathbb{E}[(1-2\lambda\eta)^2x_t^2 + 2(1-2\lambda\eta)L_t + \frac{L_t^2}{x_t^2}] < (1-4\lambda\eta)\mathbb{E}x_t^2 + 2(1-2\lambda\eta)L + \frac{V+L^2}{min(x_0^2, l\eta/(2\lambda))}.
\end{equation}
therefore, $\forall t >0$, we have
\begin{equation}
    \mathbb{E}(x_{t}-x^*)^2 < (1-4\lambda\eta)^{t} B + \frac{V\eta^2}{l}.
\end{equation}

\end{proof}

\subsubsection{Proof of Theorem 2}
\begin{lemma}\label{lemma:3}
Assume $\alpha, \beta, \varepsilon \in (0, 1)$, where $\beta << 1$. Denote $diag(1-\frac{2\beta}{1-\alpha}, \alpha, \alpha^2 + \frac{2\alpha^2}{1-\alpha}\beta$ as $\bm{\Lambda}$, $\bm{k}=(\frac{1}{(1-\alpha)^2}, -\frac{2\alpha}{(1-\alpha)^2}, \frac{\alpha^2}{(1-\alpha)^2})^T$, $\bm{e} = (1, 1, 1)^T$. If $\varepsilon < \frac{1}{3}[\frac{1-\alpha^2}{\beta} - \frac{8}{1-\alpha}]$, then $\forall \bm{d} \in \mathbb{R}^p$, we have
\begin{equation}
    ||(\bm{\Lambda} - \varepsilon\beta(1-\alpha)^2\bm{k}\bm{e}^T)\bm{d}||_2^2 < (1-\frac{4\beta}{1-\alpha})||\bm{d}||_2^2
\end{equation}
\end{lemma}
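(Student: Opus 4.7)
The plan is to reduce the inequality to a coordinatewise computation, since $\bm{M} := \bm{\Lambda} - \varepsilon\beta(1-\alpha)^2\bm{k}\bm{e}^T$ is only $3\times 3$. Writing $s := \bm{e}^T\bm{d} = d_1+d_2+d_3$ turns the rank-one perturbation into the scalar-times-vector subtraction $\varepsilon\beta(1-\alpha)^2 s\bm{k}$, and plugging in the explicit entries of $\bm{k}$ the factor $(1-\alpha)^2$ cancels cleanly, giving $(\bm{M}\bm{d})_1 = (1-\tfrac{2\beta}{1-\alpha})d_1 - \varepsilon\beta s$, $(\bm{M}\bm{d})_2 = \alpha d_2 + 2\alpha\varepsilon\beta s$, and $(\bm{M}\bm{d})_3 = (\alpha^2 + \tfrac{2\alpha^2\beta}{1-\alpha})d_3 - \alpha^2\varepsilon\beta s$. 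Squaring and summing separates $\|\bm{M}\bm{d}\|_2^2$ into three pieces: the diagonal $\|\bm{\Lambda}\bm{d}\|_2^2$, a cross term of order $\varepsilon\beta s$, and a quadratic term of order $\varepsilon^2\beta^2 s^2$.

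For the diagonal piece, $(1-\tfrac{2\beta}{1-\alpha})^2 = 1-\tfrac{4\beta}{1-\alpha} + O(\beta^2)$, while $\alpha^2$ and $(\alpha^2 + \tfrac{2\alpha^2\beta}{1-\alpha})^2$ are both strictly below $1-\tfrac{4\beta}{1-\alpha}$ as soon as $\tfrac{4\beta}{1-\alpha} < 1-\alpha^2$; this produces a strict slack attached to $d_2^2$ and $d_3^2$ in the bound $\|\bm{\Lambda}\bm{d}\|_2^2 \le (1-\tfrac{4\beta}{1-\alpha})\|\bm{d}\|_2^2 - [\text{slack}]$. For the perturbation I would use $s^2 \le 3\|\bm{d}\|_2^2$, making the quadratic contribution $O(\varepsilon^2\beta^2)$ and hence negligible against the first-order slack. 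For the cross term $-2\varepsilon\beta s\,\bm{a}^T\bm{d}$ with $\bm{a} := (1-\alpha)^2\bm{\Lambda}\bm{k} \approx (1, -2\alpha^2, \alpha^4)^T$, I would apply $|2ab|\le a^2+b^2$ in a way that puts the $(\bm{a}^T\bm{d})^2$ piece into the slack just freed, leaving the $s^2$ piece absorbable by what remains.

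The main obstacle is bookkeeping the constants so that the final sufficient condition is exactly $\varepsilon < \tfrac{1}{3}[\tfrac{1-\alpha^2}{\beta} - \tfrac{8}{1-\alpha}]$, equivalently $3\varepsilon\beta + \tfrac{8\beta}{1-\alpha} < 1-\alpha^2$. A naive Cauchy--Schwarz on the cross term gives $\alpha$-dependent factors like $\sqrt{1+4\alpha^4+\alpha^8}$ in place of the clean $3$, which yields only a strictly weaker condition. To recover the exact coefficients I expect to exploit the sign pattern of $\bm{a}$ — whose entries sum to $(1-\alpha^2)^2$, partially cancelling $s$ — and to distribute the cross-term mass carefully between the $d_2^2$ and $d_3^2$ coordinates, so that $\tfrac{8}{1-\alpha}$ emerges naturally as $\tfrac{4}{1-\alpha}+\tfrac{4}{1-\alpha}$: one copy from the excess $\alpha^2 - (1-\tfrac{4\beta}{1-\alpha})$ appearing in the diagonal bound and one copy from the half of the cross-term redistribution that lands in that same slack budget.
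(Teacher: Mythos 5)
Your architecture is the same as the paper's: expand $(\bm{\Lambda}-\varepsilon\beta(1-\alpha)^2\bm{k}\bm{e}^T)\bm{d}$ coordinatewise, drop $O(\beta^2)$, observe that the diagonal contributes exactly $(1-\frac{4\beta}{1-\alpha})d_1^2$ (zero slack at $d_1$) but strictly less than $(1-\frac{4\beta}{1-\alpha})$ times $d_2^2$ and $d_3^2$, and reduce everything to bounding the cross term $-2\varepsilon\beta s\,\bm{a}^T\bm{d}$, with $s=\bm{e}^T\bm{d}$ and $\bm{a}=(1,-2\alpha^2,\alpha^4)^T$, by a quadratic form supported on $d_2,d_3$ only. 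All of that matches the paper.

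The gap is precisely the step you defer. The device you actually name, $|2s\,\bm{a}^T\bm{d}|\le s^2+(\bm{a}^T\bm{d})^2$, cannot be rescued by redistributing mass between $d_2^2$ and $d_3^2$: both $s^2$ and $(\bm{a}^T\bm{d})^2$ contain $+d_1^2$, so this bound deposits $+2\varepsilon\beta d_1^2$ on the one coordinate with no slack; at $\bm{d}=(1,0,0)^T$ it yields $1-\frac{4\beta}{1-\alpha}+2\varepsilon\beta$, already violating the target. The ``sign pattern'' you allude to must be exploited \emph{before} any absolute-value bound: since $s$ and $\bm{a}^T\bm{d}$ both carry $d_1$ with coefficient $+1$, their product contains $+d_1^2$, so the cross term is actually favorable in the $d_1$ direction. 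The paper captures this by completing the square in $d_1$ twice,
\begin{equation*}
 s\,\bm{a}^T\bm{d}=\tfrac12\bigl[d_1+(1-2\alpha^2)d_2\bigr]^2+\tfrac12\bigl[d_1+(1+\alpha^4)d_3\bigr]^2-\bigl(\tfrac12+2\alpha^4\bigr)d_2^2-\tfrac{1+\alpha^8}{2}d_3^2+(\alpha^4-2\alpha^2)d_2d_3,
\end{equation*}
discarding the two nonnegative squares (they enter $-2\varepsilon\beta s\,\bm{a}^T\bm{d}$ with the helpful sign), bounding the leftover mixed term by AM--GM as $\alpha^2(\alpha^2-2)d_2d_3\ge-\tfrac12[\alpha^4 d_2^2+(2-\alpha^2)^2 d_3^2]$, and maximizing the resulting coefficients over $\alpha\in[0,1]$ to obtain the uniform bound $s\,\bm{a}^T\bm{d}\ge-3d_2^2-\tfrac52 d_3^2$; only then is the comparison against the diagonal slack carried out, as you describe. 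So: same overall approach, but the crux --- eliminating $d_1$ from the cross-term bound by completing the square rather than by a Cauchy--Schwarz-type split --- is the missing idea, and the substitute you propose would fail.
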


\begin{proof}
Omit $O(\beta^2)$ part, we have
\begin{equation}
     ||(\bm{\Lambda} - \varepsilon\beta(1-\alpha)^2\bm{k}\bm{e}^T)\bm{d}||_2^2 = (1-\frac{4\beta}{1-\alpha})d_1^2 + \alpha^2 d_2^2 + \alpha^4(1+\frac{4\beta}{1-\alpha})d_3^2 - 2\varepsilon \beta (d_1+d_2+d_3)(d_1 - 2\alpha^2 d_2 + \alpha^4 d_3)
\end{equation}
\begin{equation}
    \begin{split}
        & (d_1+d_2+d_3)(d_1 - 2\alpha^2 d_2 + \alpha^4 d_3) \\
        =& \frac{[d_1 + (1-2\alpha^2)d_2]^2}{2} + \frac{[d_1 + (1+\alpha^4)d_3]^2}{2} -(1/2 + 2\alpha^4)d_2^2 - \frac{1+\alpha^8}{2} + (\alpha^4 -2 \alpha^2)d_2d_3 \\
        \geq & -(\frac{1}{2} + 2\alpha^4 + \frac{\alpha^4}{2})d_2^2 - (\frac{\alpha^8}{2} + \frac{\alpha^4}{2} - 2\alpha^2 + \frac{5}{2})d_3^2 \\
        \geq & -3d_2^2 - \frac{5}{2}d_3^2.
    \end{split}
\end{equation}
Then we have
\begin{equation}
||(\bm{\Lambda} - \varepsilon\beta(1-\alpha)^2\bm{k}\bm{e}^T)\bm{d}||_2^2 \leq (1-\frac{4\beta}{1-\alpha})d_1^2 + (\alpha^2 + 3\beta\varepsilon)d_2^2 + (\alpha^4 + \frac{4\beta\alpha^4}{1-\alpha}+\frac{5\beta\varepsilon}{2})d_3^2.
\end{equation}
Since $\varepsilon < \frac{1}{3}[\frac{1-\alpha^2}{\beta} - \frac{8}{1-\alpha}]$, we can prove
\begin{eqnarray}
\alpha^2+3\beta\varepsilon &<& 1-\frac{4\beta}{1-\alpha}, \\
\alpha^4 + \frac{4\beta\alpha^4}{1-\alpha} + \frac{5\beta\varepsilon}{2} &<& 1-\frac{4\beta}{1-\alpha}.
\end{eqnarray}
Hence, we have
\begin{equation}
     ||(\bm{\Lambda} - \varepsilon\beta(1-\alpha)^2\bm{k}\bm{e}^T)\bm{d}||_2^2 < (1-\frac{4\beta}{1-\alpha})||\bm{d}||_2^2
\end{equation}
\end{proof}

\begin{proof}[Proof of Theorem 2]
The update rule is
\begin{equation}\label{eq:sgdm_update}
    \begin{split}
        \bm{w}_{t+1} &= \bm{w}_{t} - \eta\bm{v}_{t} \\
        &= \bm{w}_{t} - \eta(\alpha \bm{v}_{t-1} + \frac{\Tilde{\bm{g}}_{t}}{||\bm{w}_{t}||} +\lambda\bm{w}_{t}) \\
        &= \bm{w}_{t} - \eta(\alpha \frac{\bm{w}_{t-1}-\bm{w}_{t}}{\eta} + \frac{\Tilde{\bm{g}}_{t}}{||\bm{w}_{t}||} +\lambda\bm{w}_{t}) \\
        &= (1-\eta\lambda+\alpha)\bm{w}_{t} - \alpha\bm{w}_{t-1} - \bm{g}_t\eta.
    \end{split}
\end{equation}
Then we have 
\begin{equation}\label{eq:update_w}
\begin{split}
    ||\bm{w}_{t+1}||_2^2 &= (1-\eta\lambda + \alpha)^2 ||\bm{w}_{t}||_2^2 - 2\alpha(1+\alpha-\eta\lambda)\langle\bm{w}_{t},\bm{w}_{t-1} \rangle + \alpha^2 ||\bm{w}_{t-1}||_2^2 \\
    &+ ||\bm{g}_t||_2^2\eta^2 + 2\langle \alpha\bm{w}_{t-1}, \bm{g}_t\eta\rangle \\
    &=(1-\eta\lambda + \alpha)^2 ||\bm{w}_{t}||_2^2 - 2\alpha(1+\alpha-\eta\lambda)\langle\bm{w}_{t},\bm{w}_{t-1} \rangle + \alpha^2 ||\bm{w}_{t-1}||_2^2 \\
    &+ ||\bm{g}_t||_2^2\eta^2 + 2\langle \alpha(\bm{w}_{t}+\eta\bm{v}_{t-1}), \bm{g}_{t}\eta\rangle \\
    &=(1-\eta\lambda + \alpha)^2 ||\bm{w}_{t}||_2^2 - 2\alpha(1+\alpha-\eta\lambda)\langle\bm{w}_{t},\bm{w}_{t-1} \rangle + \alpha^2 ||\bm{w}_{t-1}||_2^2 \\
    &+ \frac{\Tilde{h}_t\eta^2}{||\bm{w}_t||_2^2}. \\
\end{split}
\end{equation}
\begin{equation}\label{eq:update_ip}
    \langle \bm{w}_{t+1}, \bm{w}_{t}\rangle = (1+\alpha-\lambda\eta)||\bm{w}_{t}||^2 - \alpha \langle \bm{w}_{t}, \bm{w}_{t-1}\rangle,
\end{equation}
Let $\bm{X}_t, \bm{A}, \bm{e}$ denote:
\begin{eqnarray}
\bm{X}_t &=& \left(
\begin{array}{c}
      ||\bm{w}_t||_2^2 \\
      \langle \bm{w}_t, \bm{w}_{t-1} \rangle \\
      ||\bm{w}_{t-1}||_2^2 \\
\end{array}
\right), \label{eq:X_t}\\
\bm{A} &=& \left(
\begin{array}{ccc}
   (1+\alpha-\lambda\eta)^2  & -2\alpha(1+\alpha-\lambda\eta) & \alpha^2 \\
    1+\alpha -\lambda\eta & -\alpha & 0 \\
    1 & 0 & 0\\
\end{array}
\right), \label{eq:A}\\
\bm{e} &=& \left(
\begin{array}{c}
      1 \\
      0 \\
      0 \\
\end{array}
\right),\label{eq:e}\\
\end{eqnarray}
respectively. The Eq.(\ref{eq:update_w}), (\ref{eq:update_ip}) can be formulated as
\begin{equation}\label{eq:iter_map2}
    \bm{X}_{t+1} = \bm{A}\bm{X}_t + \frac{\Tilde{h}_t \eta^2}{\bm{e}^T \bm{X}_t}\bm{e}.
\end{equation}
When $\lambda\eta < (1-\sqrt{\alpha})^2$, the eigen value of $\bm{A}$ are all real number:
\begin{eqnarray}
\lambda_1 &=&  \frac{(1+\alpha - \lambda\eta)^2 + (1+\alpha -\lambda\eta)\sqrt{(1+\alpha-\lambda\eta)^2-4\alpha}}{2} - \alpha = 1 - \frac{2\lambda\eta}{1-\alpha} + O(\lambda^2\eta^2), \\
\lambda_2 &=& \alpha, \\
\lambda_3 &=& \frac{(1+\alpha - \lambda\eta)^2 - (1+\alpha -\lambda\eta)\sqrt{(1+\alpha-\lambda\eta)^2-4\alpha}}{2} - \alpha = \alpha^2 + \frac{2\alpha^2}{(1-\alpha)}\lambda\eta+O(\lambda^2\eta^2),
\end{eqnarray}
and it's easy to prove 
\begin{equation}
0 < \lambda_3 < \lambda_2 = \alpha < \lambda_1 < 1.
\end{equation}
Therefore, $\bm{A}$ can be formulated as 
\begin{equation}
\bm{S}^{-1}\bm{A}\bm{S} = \bm{\Lambda},
\end{equation}
where $\bm{\Lambda}$ is a diagonal matrix whose diagonal elements are the eigen value of $\bm{A}$; the column vector of $\bm{S}$ is the eigen vectors of $\bm{A}$, note the fromulation of $\bm{S}, \bm{\Lambda}$ are not unique. Specifically, we set $\bm{\Lambda}, \bm{S}$ as

\begin{eqnarray}
\bm{\Lambda} &=& \left(
\begin{array}{ccc}
     \lambda_1 & 0 & 0  \\
     0 & \lambda_2 & 0 \\
     0 & 0 & \lambda_3
\end{array}
\right), \\
\bm{S} &=& \left(
\begin{array}{ccc}
     1 & 1 & 1  \\
     \frac{1+\alpha - \lambda\eta}{\alpha + \lambda_1} & \frac{1+\alpha - \lambda\eta}{\alpha + \lambda_2} & \frac{1+\alpha - \lambda\eta}{\alpha + \lambda_3} \\
     \frac{1}{\lambda_1} & \frac{1}{\lambda_2} & \frac{1}{\lambda_3}
\end{array}
\right).\label{eq:S}
\end{eqnarray}
Moreover, the inverse of $\bm{S}$ exists, and can be explicitly expressed as
\begin{equation}
    \bm{S}^{-1} = \left(
    \begin{array}{ccc}
        \frac{(\alpha+\lambda_1)\lambda_1}{(\lambda_1-\alpha)(\lambda_1 - \lambda_3)} & -\frac{2\lambda_1 (\alpha+\lambda_1)(\alpha+\lambda_3)}{(\lambda_1 - \lambda_3)(\lambda_1 - \alpha)(1+\alpha-\beta)} & \frac{\lambda_1 \lambda_3 (\alpha+\lambda_1)}{(\lambda_1 - \alpha)(\lambda_1 - \lambda_3)}\\
        -\frac{2\alpha^2}{(\lambda_1-\alpha)(\alpha-\lambda_3)} &  \frac{2\alpha(\alpha+\lambda_1)(\alpha + \lambda_3)}{(\lambda_1 - \alpha)(\alpha-\lambda_3)(1+\alpha-\beta)} & -\frac{2\alpha\lambda_1 \lambda_3}{(\lambda_1-\alpha)(\alpha-\lambda_3)} \\
         \frac{(\alpha+\lambda_3)\lambda_3}{(\alpha-\lambda_3)(\lambda_1 - \lambda_3)} & - \frac{2\lambda_3 (\alpha+\lambda_3)(\alpha+\lambda_1)}{(\lambda_1 - \lambda_3)(\alpha-\lambda_3)(1+\alpha-\beta)} & \frac{(\alpha+\lambda_3)\lambda_1\lambda_3}{(\alpha-\lambda_3)(\lambda_1-\lambda_3)}\\
     \end{array}
    \right). \label{eq:S_inv}
\end{equation}

let $\bm{Y}_t = \bm{S}^{-1}\bm{X}_t$, combining with Eq.(\ref{eq:iter_map2}), we have

\begin{equation}\label{eq:iter_map3}
    \bm{Y}_{t+1} = \bm{\Lambda} \bm{Y}_{t} + \frac{\Tilde{h}_t\eta^2}{(\bm{S}^T\bm{e})^T \bm{Y}_t} \bm{S}^{-1}\bm{e}.
\end{equation}
Combining with Eq.(\ref{eq:S}) and Eq.(\ref{eq:S_inv}), and set $\bm{Y}_t = (\Tilde{a}_t, \Tilde{b}_t, \Tilde{c}_t)^T$, we can rewritten Eq.(\ref{eq:iter_map3}) as 
\begin{eqnarray}
\Tilde{a}_{t+1} &=& \lambda_1 \Tilde{a}_t + \frac{\Tilde{h}_t\eta^2}{\Tilde{a}_t + \Tilde{b}_t + \Tilde{c}_t} \cdot \frac{(\alpha+\lambda_1)\lambda_1}{(\lambda_1-\alpha)(\lambda_1-\lambda_3)}, \label{eq:map2_1}\\
\Tilde{b}_{t+1} &=& \alpha \Tilde{b}_t - \frac{\Tilde{h}_t\eta^2}{\Tilde{a}_t + \Tilde{b}_t + \Tilde{c}_t} \cdot \frac{2\alpha^2}{(\lambda_1-\alpha)(\alpha-\lambda_3)}, \label{eq:map2_2}\\
\Tilde{c}_{t+1} &=& \lambda_3 \Tilde{c}_t + \frac{\Tilde{h}_t\eta^2}{\Tilde{a}_t + \Tilde{b}_t + \Tilde{c}_t} \cdot \frac{(\alpha+\lambda_3)\lambda_3}{(\alpha-\lambda_3)(\lambda_1-\lambda_3)}. \label{eq:map2_3}
\end{eqnarray}
Note $||\bm{w}_t||_2^2 = \Tilde{a}_t + \Tilde{b}_t + \Tilde{c}_t$. Now we can prove the following inequations by mathematical induction
\begin{eqnarray}
\Tilde{b}_t &<& 0, \label{eq:1}\\
\Tilde{c}_t &>& 0, \label{eq:2}\\
(\alpha - \lambda_1)\Tilde{b}_t &>& (\lambda_1 - \lambda_3) \Tilde{c}_t, \label{eq:3}\\
\Tilde{a}_{t} + \Tilde{b}_{t} + \Tilde{c}_{t} &>& 0, \label{eq:4}\\
\Tilde{a}_{t+1} + \Tilde{b}_{t+1} + \Tilde{c}_{t+1}, &>& \lambda_1(\Tilde{a}_t + \Tilde{b}_t + \Tilde{c}_t) + \frac{\Tilde{h}_t\eta^2}{\Tilde{a}_t + \Tilde{b}_t + \Tilde{c}_t}, \label{eq:5}
\end{eqnarray}

Since the start point $\bm{X}_1 = (a_1, a_1, a_1)^T$ ($a_1>0$), the start point $\bm{Y}_1 = \bm{S}^{-1}\bm{X}_1$. Combining with Eq.(\ref{eq:S_inv}), we have
\begin{eqnarray}
\Tilde{b}_1 &=& -\frac{2\alpha^2\lambda\eta}{(\lambda_1-\alpha)(\alpha-\lambda_3)}a_1,\\
\Tilde{c}_1 &=& \frac{\lambda_3(\lambda_3+\alpha)(1-\alpha+\lambda\eta)}{(\lambda_3-\alpha)(\lambda_1-\lambda_3)(1+\alpha-\lambda\eta)}  (\frac{1-\alpha-\lambda\eta}{1-\alpha+\lambda\eta} -\lambda_1) a_1,
\end{eqnarray}
by which it's easy to prove $(\alpha - \lambda_1)\Tilde{b}_1 > (\lambda_1-\lambda_3)\Tilde{c}_1$.
Besides $\Tilde{a}_1 + \Tilde{b}_1 + \Tilde{c}_1 = \bm{e}^T\bm{S}\bm{Y}_1 = \bm{e}^T \bm{X}_1 = a_1 > 0$.

Suppose for $t=T$, Eq. (\ref{eq:1}), (\ref{eq:2}), (\ref{eq:3}), (\ref{eq:4}) hold, combining with Eq.(\ref{eq:map2_2}), (\ref{eq:map2_3}), we can derive $\Tilde{b}_{T+1}<0, \Tilde{a}_{T+1} > 0$, so Eq.(\ref{eq:1}), (\ref{eq:2}) hold for $t=T+1$; Besides, we have
\begin{equation}
\begin{split}
        (\alpha - \lambda_1)\Tilde{b}_{T+1} &= \alpha(\alpha-\lambda_1) \Tilde{b}_T + \frac{\Tilde{h}_t\eta^2}{\Tilde{a}_T + \Tilde{b}_T + \Tilde{c}_T} \cdot \frac{2\alpha^2}{(\alpha-\lambda_3)} \\
    &>\lambda_3(\lambda_1-\lambda_3) \Tilde{c}_T + \frac{\Tilde{h}_t\eta^2}{\Tilde{a}_T + \Tilde{b}_T + \Tilde{c}_T} \cdot \frac{(\alpha+\lambda_3)\lambda_3}{(\alpha -\lambda_3)} \\
     &= (\lambda_1 - \lambda_3)\Tilde{c}_{T+1},
\end{split}
\end{equation}
thus Eq.(\ref{eq:3}) holds for $t=T+1$. Sum Eq.(\ref{eq:map2_1}), Eq.(\ref{eq:map2_2}), Eq.(\ref{eq:map2_3}), due to Eq.(\ref{eq:3}) we have 
\begin{equation}\label{eq:update_w_sgdm}
\begin{split}
        \Tilde{a}_{T+1} + \Tilde{b}_{T+1} + \Tilde{c}_{T+1} &= \lambda_1 \Tilde{a}_{T} + \alpha \Tilde{b}_{T} + \lambda_3 \Tilde{c}_{T} + \frac{\Tilde{h}_t\eta^2}{\Tilde{a}_T + \Tilde{b}_T + \Tilde{c}_T} \\
        &> \lambda_1(\Tilde{a}_{T} + \Tilde{b}_{T} + \Tilde{c}_{T}) + \frac{\Tilde{h}_t\eta^2}{\Tilde{a}_T + \Tilde{b}_T + \Tilde{c}_T},
\end{split}
\end{equation}
Eq.(\ref{eq:5}) holds for $t=T+1$, combining with the fact that $\Tilde{a}_{T} + \Tilde{b}_{T} + \Tilde{c}_{T}>0$, we have $\Tilde{a}_{T+1} + \Tilde{b}_{T+1} + \Tilde{c}_{T+1}>0$.

According to Lemma \ref{lemma:2}, we can estimate the lower bound of $\Tilde{a}_{t} + \Tilde{b}_{t} + \Tilde{c}_{t}$: when $t > 1 +  \frac{log((\sqrt{2}-1)\sqrt{l\eta/(4\lambda)}) - log(|||\bm{w}_0||_2^2 - \sqrt{l\eta/(4\lambda)}|)}{log(1-2\lambda\eta)}$,
\begin{equation}
 \Tilde{a}_{t} + \Tilde{b}_{t} + \Tilde{c}_{t} \geq \sqrt{\frac{l\eta}{2(1-\lambda_1)}} \approx \sqrt{\frac{l\eta(1-\alpha)}{4\lambda}} 
\end{equation}

Now we can analyze the expectation of distance($l^{2}$ norm) between $\bm{Y}_t = (\Tilde{a}_t, \Tilde{b}_t, \Tilde{c}_t)^T$ and the fixed point $\bm{Y}^* =(\Tilde{a}^*, \Tilde{b}^*, \Tilde{c}^*)^T$ which satisfies
\begin{equation}
    \bm{Y}^* = \bm{\Lambda}\bm{Y}^* + \frac{L\eta^2}{(\bm{S}^T\bm{e})^T \bm{Y}^*} \bm{S}^{-1}\bm{e}.
\end{equation}
Assume $x_t = \Tilde{a}_t + \Tilde{b}_t + \Tilde{c}_t$, $x^* = \Tilde{a}^* + \Tilde{b}^* + \Tilde{c}^* > 0$, then we have:
\begin{equation}\label{eq:diff1}
\bm{Y}_{t+1} - \bm{Y}^* = (\bm{\Lambda} - \frac{L\eta^2}{x_t x^*} \bm{k} \bm{e}^T)(\bm{Y}_{t} - \bm{Y}^*) + \frac{(\Tilde{h}_t - L)\eta^2}{x_t} \bm{k}\\
\end{equation}
where $\bm{k} = (k_1, k_2, k_3)^T = \bm{S}^{-1}\bm{e}$. In the following context, will omit the $O(\lambda^2\eta^2)$ part since $\lambda\eta \ll 1$. $k_1, k_2, k_3$ can be approximated as
\begin{eqnarray}
k_1 &=& \frac{1}{(1-\alpha)^2} + O(\beta), \\
k_2 &=& - \frac{2\alpha}{(1-\alpha)^2} + O(\beta), \\
k_3 &=& \frac{\alpha^2}{(1-\alpha)^2} + O(\beta).
\end{eqnarray}
Then we have
\begin{equation}
\mathbb{E}[||\bm{Y}_{t+1} - \bm{Y}^*||_2^2|\bm{Y_t}] = || (\bm{\Lambda} - \frac{L}{x_t x^*} \bm{k} \bm{e}^T)(\bm{Y}_{t} - \bm{Y}^*)||_2^2 + \frac{\mathbb{E}[(\Tilde{h}_t-L)^2|\bm{Y_t}]\eta^4}{x_t^2}||\bm{k}||_2^2
\end{equation}
It's easy to obtain the value of fixed point $x^* =\Tilde{a}^* + \Tilde{b}^* + \Tilde{c}^* = \sqrt{\frac{L\eta}{\lambda(1-\alpha)(2-\frac{\lambda\eta}{1+\alpha})}}$, and we have known that if $t > 1 +  \frac{log((\sqrt{2}-1)\sqrt{l\eta/(4\lambda)}) - log(|||\bm{w}_0||_2^2 - \sqrt{l\eta/(4\lambda)}|)}{log(1-2\lambda\eta)}$, $x_t > \sqrt{\frac{l\eta(1-\alpha)}{4\lambda}}$,, therefore we have
\begin{eqnarray}
\frac{L\eta^2}{x_t x^*} &<& \sqrt{\frac{2L}{l}}\cdot 2\lambda\eta, \\
\frac{\mathbb{E}[(\Tilde{h}_t-L)^2|\bm{Y_t}]}{x_t^2}\eta^4 &<& \frac{4V\eta^3\lambda}{l(1-\alpha)}. \label{eq:var_up}
\end{eqnarray}
According to assumption 3, we can prove
\begin{equation}
    \sqrt{\frac{2L}{l}} < \frac{(1-\alpha)^2}{3} [\frac{1-\alpha^2}{\beta}-\frac{8}{1-\alpha}],
\end{equation}
combining with Lemma \ref{lemma:3}, we have
\begin{equation}
\mathbb{E}[||\bm{Y}_{t+1} - \bm{Y}^*||_2^2|\bm{Y_t}] < (1-\frac{4\lambda\eta}{1-\alpha})||\bm{Y}_{t} - \bm{Y}^*||_2^2 + \frac{4V\eta^3\lambda ||\bm{k}||_2^2}{l(1-\alpha)},
\end{equation}
which implies 
\begin{equation}
\mathbb{E}||\bm{Y}_{t+1} - \bm{Y}^*||_2^2 < (1-\frac{4\lambda\eta}{1-\alpha})^{t-T}\mathbb{E}||\bm{Y}_{T} - \bm{Y}^*||_2^2 + \frac{V\eta^2(1+4\alpha^2+\alpha^4)}{l(1-\alpha)^4},
\end{equation}
where $T = [1 +  \frac{log((\sqrt{2}-1)\sqrt{l\eta/(4\lambda)}) - log(|||\bm{w}_0||_2^2 - \sqrt{l\eta/(4\lambda)}|)}{log(1-2\lambda\eta)}]$. Therefore, similar to the proof of theorem 1, $\exists B >0$,
\begin{equation}\label{eq:sgdm_converge}
    \mathbb{E}||\bm{Y}_{t+1} - \bm{Y}^*||_2^2 < (1-\frac{4\lambda\eta}{1-\alpha})^{t}B + \frac{V\eta^2(1+4\alpha^2+\alpha^4)}{l(1-\alpha)^4}.
\end{equation}
Recall $||\bm{w}_t||_2^2 = \bm{e}^T \bm{Y}_t$, therefore 
\begin{equation}\label{eq:final}
    \mathbb{E}[||\bm{w}_t||_2^2 - (w^*)^2]^2 \leq 3\mathbb{E}||\bm{Y}_{t+1} - \bm{Y}^*||_2^2 < 3(1-\frac{4\lambda\eta}{1-\alpha})^{t}B + \frac{3V\eta^2(1+4\alpha^2+\alpha^4)}{l(1-\alpha)^4}.
\end{equation}
\end{proof}
\begin{remark}
By Eq.(\ref{eq:final}), the variance of $||\bm{w}_t||_2^2$ is bounded by $\frac{3V\eta^2(1+4\alpha^2+\alpha^4)}{l(1-\alpha)^4}$, which is not small enough. But we somehow can reduce it: according to Eq.(\ref{eq:var_up}), if $x_t$ is close to its theoretical value, then $\frac{\mathbb{E}[(\Tilde{h}_t-L)^2|\bm{Y_t}]}{x_t^2}\eta^4 < \frac{4V\eta^3\lambda(1-\alpha)}{l}$, hence variance of $||\bm{w}_t||_2^2$ can be bounded by $\frac{3V\eta^2(1+4\alpha^2+\alpha^4)}{2L(1-\alpha)^2}$.
\end{remark}
\subsection{Proof of Theorem 3}
\begin{proof}
In SGD case, we have
\begin{equation}
    \langle \bm{w}_{t+1}, \bm{w}_{t} \rangle = (1-\lambda\eta)||\bm{w}_t||_2^2,
\end{equation}
then we have
\begin{equation}
    cos^2\Delta_t = \frac{ \langle \bm{w}_{t+1}, \bm{w}_{t} \rangle^2}{||\bm{w}_t||_2^2 \cdot ||\bm{w}_{t+1}||_2^2} =  (1-2\lambda\eta) \frac{||\bm{w}_t||_2^2}{||\bm{w}_{t+1}||_2^2}.
 \end{equation}
 Accoring to Theorem 1, when $V = 0$, we have
 \begin{equation}
     |\frac{||\bm{w}_t||_2^2}{(w^*)^2} -1 | < O((1-2\lambda\eta)^T),
 \end{equation}
 which implies
 \begin{equation}
     |\Delta_t^2 - 2\lambda\eta| = |cos^2\Delta_t - 1 + 2\lambda\eta| < O((1-2\lambda\eta)^T).
 \end{equation}
 Since $\Delta_t$ and $\sqrt{2\lambda\eta}$ are both finite, we have
 \begin{equation}
     |\Delta_t - \sqrt{2\lambda\eta}| < O((1-2\lambda\eta)^T).
 \end{equation}
 In SGDM case, according to theorem \ref{theom:sgdm}, when $V=0$, Eq.(\ref{eq:sgdm_converge} implies
 \begin{eqnarray}
 |||\bm{w}_t||_2^2 - (w^*)^2| &<& O((1-\frac{2\lambda\eta}{1-\alpha})^t), \\
 |\langle \bm{w}_{t+1}, \bm{w}_t \rangle^2 - \frac{1+\alpha-2\lambda\eta}{1+\alpha}(w^*)^2| &<& O((1-\frac{2\lambda\eta}{1-\alpha})^t).
 \end{eqnarray}
Therefore it is easy to derive
\begin{equation}
     |\Delta_t^2 - \frac{2\lambda\eta}{1+\alpha}| = |cos^2\Delta_t - 1 + \frac{2\lambda\eta}{1+\alpha}| < O((1-\frac{2\lambda\eta}{1+\alpha})^T).
\end{equation}
Since $\Delta_t$ and $\sqrt{\frac{2\lambda\eta}{1+\alpha}}$ are both finite, we have
 \begin{equation}
     |\Delta_t - \sqrt{\frac{2\lambda\eta}{1+\alpha}}| < O((1-\frac{2\lambda\eta}{1+\alpha})^T).
 \end{equation}
\end{proof}

\subsection{Proof of Corollary 3.1}
\begin{proof}[Proof of Corollary 3.1]
In SGD case,  and Eq.(\ref{eq:iter_sgd}), we have
\begin{equation}
    ||\bm{w}_{t+1}||_2^2 > (1-2\lambda\eta)||\bm{w}_t||_2^2,
\end{equation}
which means
\begin{equation}
||\bm{w}_{t+T}||_2^2 > (1-2\lambda\eta)^T||\bm{w}_t||_2^2.
\end{equation}
On the other hand,  we know that when $\eta$ is divided by $k$, $||\bm{w}_t||_2^2$ should be divided by $\sqrt{k}$ to reach the new equilibrium condition, therefore we have
\begin{equation}
    \frac{||\bm{w}_{t+T}||^2}{||\bm{w}_{t}||^2} = \frac{1}{\sqrt{k}}  > (1-2\lambda\eta)^T.
\end{equation}
Since $\lambda\eta \ll 1$, $\log(1-2\lambda\eta) \approx -2\lambda\eta$, thus
\begin{equation}
    T > \frac{\log(k)}{4\lambda\eta}.
\end{equation}.

In SGDM case, by Eq.(\ref{eq:update_w_sgdm}), we have
\begin{equation}
    ||\bm{w}_{t+1}||_2^2 > (1-\frac{2\lambda\eta}{1-\alpha})||\bm{w}_t||_2^2,
\end{equation}
Similar to SGD case, we have
\begin{equation}
    T > \frac{\log(k)(1-\alpha)}{4\lambda\eta}.
\end{equation}.
\end{proof}
\section{Experiments}
\subsection{Experiments on sythetic data}\label{app:sythetic}
In this section we apply experiments on sythetic data to prove our claim in section 4: theorem \ref{theom:sgd} only requires expected square norm of unit gradient is locally steady, it do not need to remain unchanged across the whole training. The proof of theorem implies square norm of weight is determined by the following iterative map:
\begin{equation}\label{eq:simul_iter_sgd}
    x_{t+1} = (1-2\lambda\eta)x_t + \frac{L_t\eta^2}{x_t},
\end{equation}
where $\lambda, \eta \in (0, 1)$, $L_t$ denotes the square of unit gradient norm. Hence we simulate $x_t$ with different type of $\{ L_t\}_{t=1}^{\infty}$. Results in Figure \ref{fig:simulation} shows as long as the local variance of square norm of unit gradient is not too much, and expectation of $L_t$ changes smoothly, weight norm can quickly converge to its theoretical value base on expectation of square norm of unit gradient.

\begin{figure}[ht]
\centering
\subfigure[$L_t = 10$]{
\begin{minipage}[t]{0.45\linewidth}
\centering
\includegraphics[width=2.4in]{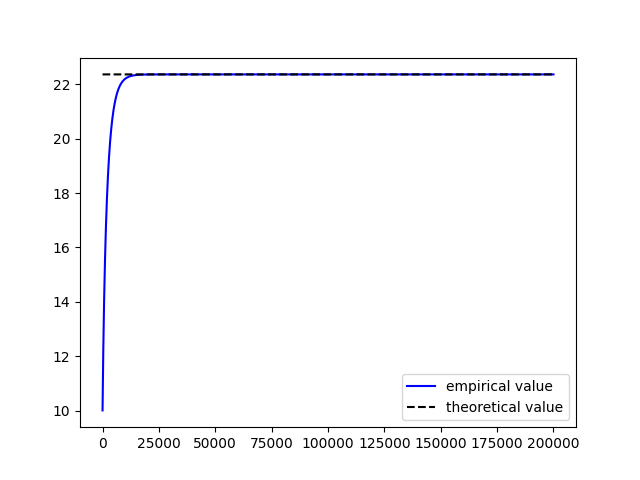}
\end{minipage}
}
\subfigure[$L_t = 10 + \epsilon_t $]{
\begin{minipage}[t]{0.45\linewidth}
\centering
\includegraphics[width=2.4in]{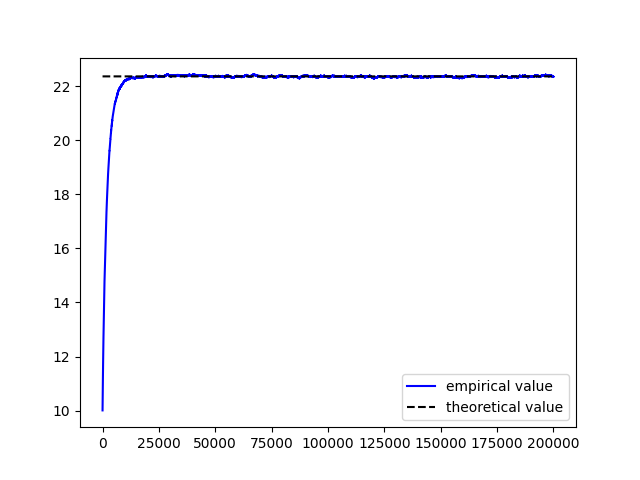}
\end{minipage}
}

\subfigure[$L_t = 10+ sin(\frac{t}{10^5})$]{
\begin{minipage}[t]{0.45\linewidth}
\centering
\includegraphics[width=2.4in]{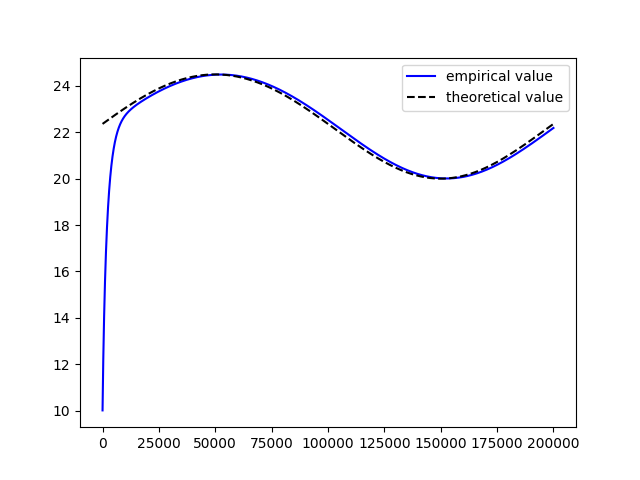}
\end{minipage}
}
\subfigure[$L_t = 10+ sin(\frac{t}{10^5}) + \epsilon_t $]{
\begin{minipage}[t]{0.45\linewidth}
\centering
\includegraphics[width=2.4in]{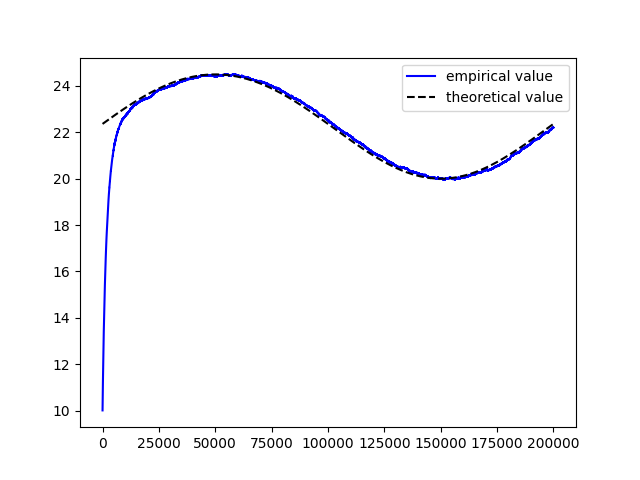}
\end{minipage}
}

\caption{Simulation of Eq.(\ref{eq:simul_iter_sgd}), $\eta=0.1$, $\lambda=0.001$, $x_0=10$, $\epsilon_t \sim \mathcal{U}(-3,3)$.}
\label{fig:simulation}
\end{figure}

We also simulate SGDM case by following iteration map
\begin{equation}
    \bm{X}_{t+1} = \bm{A}\bm{X}_t + \frac{L_t\eta^2}{\bm{X}_{t}[0]} \cdot \bm{e},
\end{equation}
where $\bm{A}$, $\bm{X}_t$, $\bm{e}$ is defined as Eq.(\ref{eq:X_t}), (\ref{eq:A}), (\ref{eq:e}). Simulation results is shown in Figure \ref{fig:simulation_sgdm}.
\begin{figure}[ht]
\centering
\subfigure[$L_t = 10$]{
\begin{minipage}[t]{0.45\linewidth}
\centering
\includegraphics[width=2.4in]{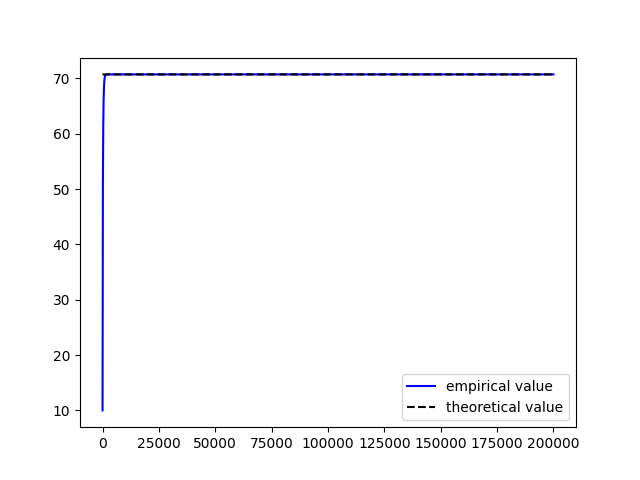}
\end{minipage}
}
\subfigure[$L_t = 10 + \epsilon_t $]{
\begin{minipage}[t]{0.45\linewidth}
\centering
\includegraphics[width=2.4in]{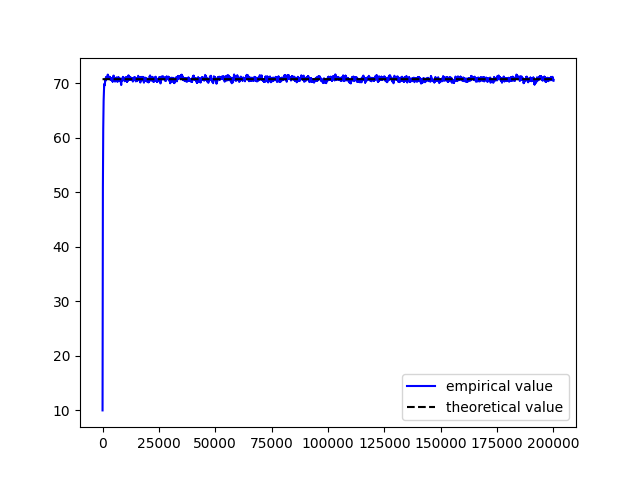}
\end{minipage}
}

\subfigure[$L_t = 10+ sin(\frac{t}{10^5})$]{
\begin{minipage}[t]{0.45\linewidth}
\centering
\includegraphics[width=2.4in]{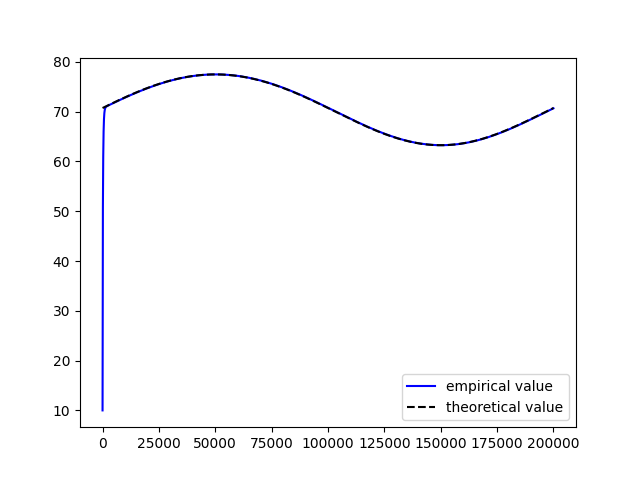}
\end{minipage}
}
\subfigure[$L_t = 10+ sin(\frac{t}{10^5}) + \epsilon_t $]{
\begin{minipage}[t]{0.45\linewidth}
\centering
\includegraphics[width=2.4in]{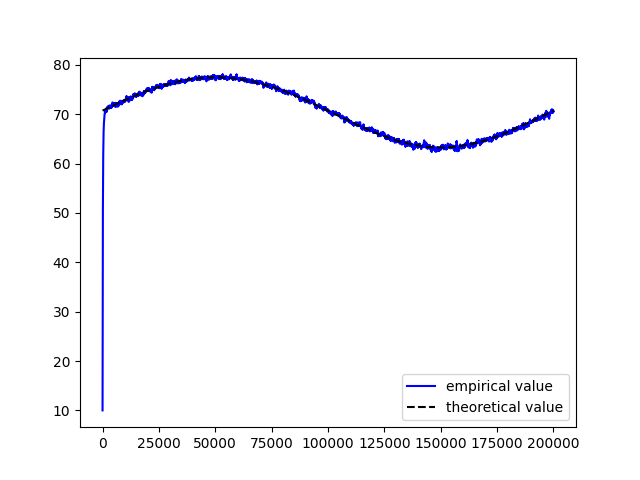}
\end{minipage}
}

\caption{Simulation of Eq.(\ref{eq:simul_iter_sgd}), $\eta=0.1$, $\lambda=0.001$, $x_0=10$, $\epsilon_t \sim \mathcal{U}(-3,3)$, $\alpha=0.9$.}
\label{fig:simulation_sgdm}
\end{figure}

\subsection{Rethinking Linear Scaling Principle in Spherical Motion Dynamics}
In this section, we will discuss the effect of Linear Scaling Principle~(LSP) under the view of SMD. Linear Scaling Principle is proposed by \cite{goyal2017accurate} to tune the learning rate $\eta$ with batch size $B$ by $\eta \propto B$. The intuition of LSP is if weights do not change too much within $k$ iterations, then $k$ iterations of SGD with learning rate $\eta$ and minibatch size $B$ (Eq.(\ref{eq:lsr_assump_1})) can be approximated by a single iteration of SGD with learning rate $k\eta$ and minibatch size $kB$ (Eq.(\ref{eq:lsr_assump_2}):
\begin{eqnarray}
\bm{w}_{t+k} &=& \bm{w}_t - \eta \sum_{j<k} (\frac{1}{B}\sum_{x\in\mathcal{B}_j}\frac{\partial \bm{L}}{\partial \bm{w}}\Big|_{\bm{w}_{t+j}, x} + \lambda \bm{w}_{t+j}), \label{eq:lsr_assump_1}\\
\bm{w}_{t+1} &=& \bm{w}_t - k\eta (\frac{1}{kB} \sum_{j<k}\sum_{x\in\mathcal{B}_j}\frac{\partial \bm{L}}{\partial \bm{w}}\Big|_{\bm{w}_{t}, x} + \lambda \bm{w}_{t}). \label{eq:lsr_assump_2}
\end{eqnarray}
\cite{goyal2017accurate} shows that combining with gradual warmup, LSP can enlarge the batch size up to $8192 (256\times 32)$ without severe degradation on ImageNet experiments. 

LSP has been proven extremely effective in a wide range of applications. However, from the perspective of SMD, the angular update mostly relies on the pre-defined hyper-parameters, and it is hardly affected by batch size. To clarify the connection between LSP and SMD, we explore the learning dynamics of DNN with different batch size by conducting extensive experiments with ResNet50 on ImageNet, the training settings rigorously follow \cite{goyal2017accurate}: momentum coefficient is $\alpha=10^{-4}$; WD coefficient is $\lambda=10^{-4}$; Batch size is denoted by $B$; learning rate is initialized as $\frac{B}{256}\cdot 0.1$; Total training epoch is $90$ epoch, and learning rate is divided by $10$ at $30$, $60$, $80$ epoch respectively.

\begin{figure}[ht]
\centering
\subfigure[$B=256$ versus $B=1024$]{
\includegraphics[width=6.5cm]{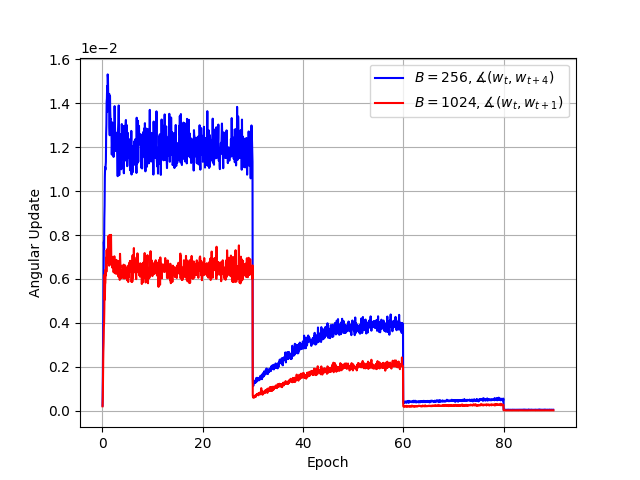}
\label{fig:imagenet_au_4x}
}
\subfigure[$B=256$ versus $B=4096$]{
\includegraphics[width=6.5cm]{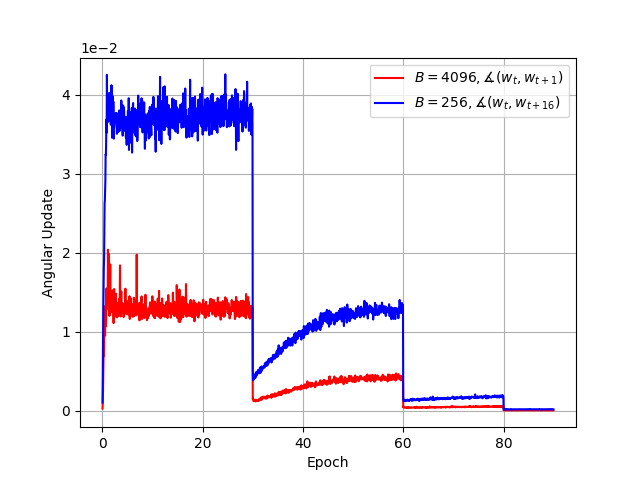}

\label{fig:imagenet_au_16x}
}
\caption{Angular update of weights from layer1.0.conv2 in ResNet50. The {\color{blue} blue lines} represent the angular update of weights within a single iteration in when batch settings is $B=1024(4096)$; The {\color{red} red lines} represent the accumulated angular update within 4(16) iterations in smaller batch setting($B=256$).}
\label{fig:appendix_au_x}
\end{figure}

\begin{figure}[ht]
\centering
\subfigure[$\frac{||\bm{g}^{(1024)}||}{||\bm{g}^{(256)}||}$]{
\includegraphics[width=6.5cm]{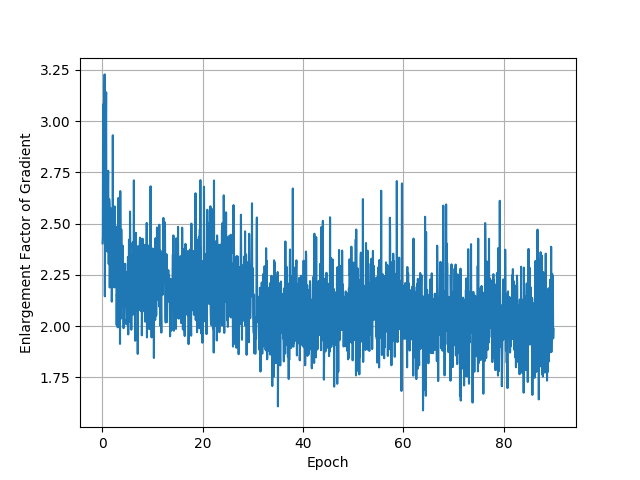}
}
\subfigure[$\frac{||\bm{g}^{(4096)}||}{||\bm{g}^{(256)}||}$]{
\includegraphics[width=6.5cm]{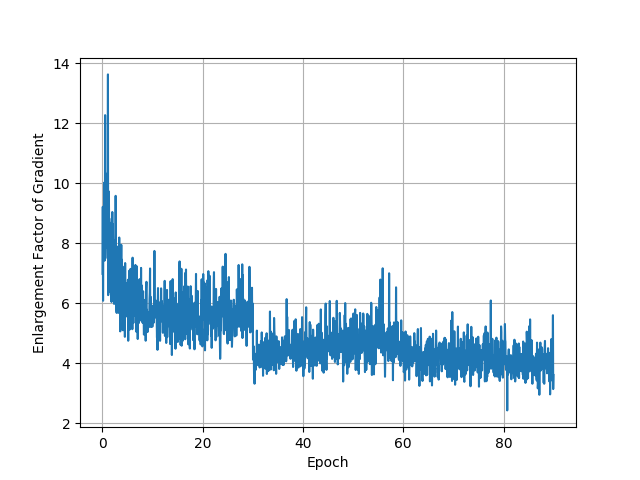}

}
\caption{Enlargement ratio of gradients' norm of weights from layer1.0.conv2 when batch size increases . $||\bm{g}^{(k)}||$ represents the gradient's norm computed using $k$ samples(not average).}
\label{fig:appendix_gradient_x}
\end{figure}

The results of experiments(Figure \ref{fig:appendix_au_x}, \ref{fig:appendix_gradient_x}) suggests that the assumption of LSP does not always hold in practice because of three reasons: first, the approximate equivalence between a single iteration in large batch setting, and multiple iterations in small batch setting can only hold in pure SGD formulation, but momentum method is far more commonly used; Second, according Theorem \ref{theom:sgdm}, the enlargement ratio of angular update is only determined by the increase factor of learning rate. Figure \ref{fig:appendix_au_x} shows in practice, the accumulated angular update $\measuredangle(\bm{w}_t, \bm{w}_{t+k})$ in small batch batch setting is much larger than angular update $\measuredangle(\bm{w}_t, \bm{w}_{t+1})$ of a single iteration in larger batch setting when using Linear Scaling Principle; Third, even in pure SGD cases, the enlargement of angular update still relies on the increase of learning rate, and has no obvious connection to the enlargement of gradient's norm when equilibrium condition is reached (see Figure \ref{fig:appendix_gradient_x}). 

In conclusion, though LSP usually works well in practical applications, SMD suggests we can find more sophisticated and reasonable schemes to tune the learning rate when batch size increases.

\subsection{Spherical Motion Dynamics with Different Network Structures}
\begin{figure}[t]
\centering
\subfigure[Angle Update of MobileNet-V2]{
\includegraphics[width=6.5cm]{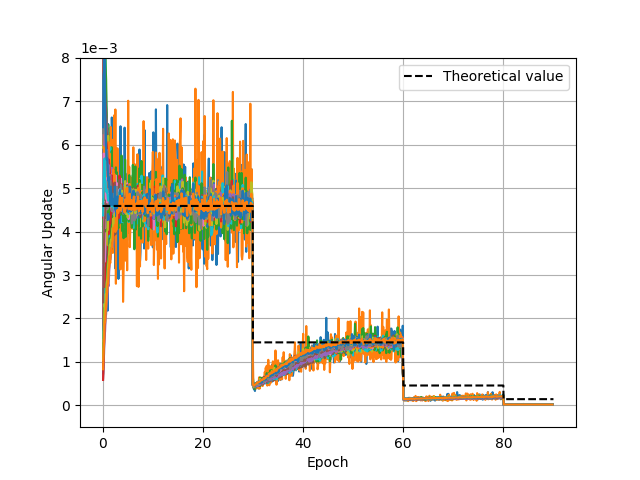}
\label{fig:imagenet_mbv2}
}
\subfigure[Angular Update of ShuffleNet-V2+]{
\includegraphics[width=6.5cm]{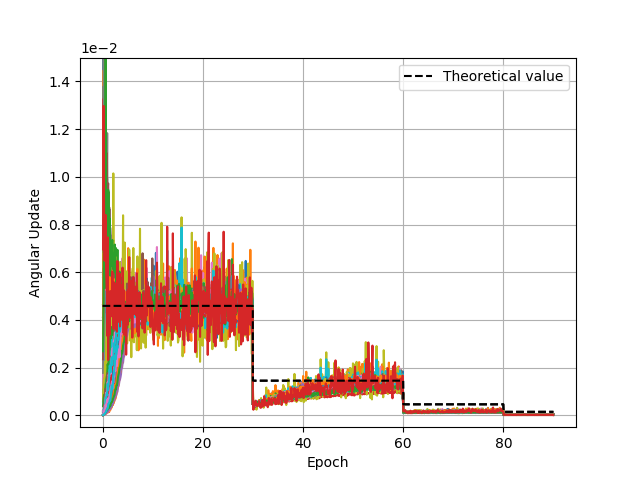}

\label{fig:imagenet_sfv2}
}
\caption{The angular update $\bm{\Delta}_t$ of MobileNet-V2\cite{sandler2018mobilenetv2} and ShuffleNet-V2+\cite{ma2018shufflenet}. The solid lines with different colors represent all scale-invariant weights from the model; The dash black line represents the theoretical value of angular update, which is computed by $\sqrt{\frac{2\lambda\eta}{1+\alpha}}$. Learning rate $\eta$ is initialized as $0.5$, and divided by $10$ at epoch 30, 60, 80 respectively; WD coefficient $\lambda$ is $4\times 10^{-5}$; Momentum parameter $\alpha$ is set as $0.9$.}
\label{fig:appendix_au_structures}
\end{figure}

\end{document}